\documentclass[11pt]{article}
\usepackage[a4paper,  total={6in, 8in}]{geometry}

\usepackage{amsmath,amssymb,amsthm}
\usepackage{algorithm,algorithmic}
\usepackage{dsfont}

\newtheorem{lemma}{Lemma}
\newtheorem{theorem}{Theorem}

\newtheorem{definition}{Definition}
\newtheorem{condition}{Condition}
\newtheorem{proposition}{Proposition}

\usepackage{algorithm}
\usepackage{algorithmic}

\usepackage{graphicx} 
\usepackage{subfigure} 
\usepackage{url}

\newcommand{\Norm}[1]{ \| #1 \| }

\title{Outlier Robust Online  Learning}
\date{}
\author{
	Jiashi~Feng
	%\thanks{ Use footnote for providing further information
	%about author (webpage, alternative address)---\emph{not} for acknowledging
	%funding agencies.}
	\\
	Department of Electrical and Computer Engineering\\
	National University of Singapore\\
	\texttt{elefjia@nus.edu.sg} \\ \\
	%\And
	Huan~Xu \\
	School of Industrial and Systems Engineering \\
	Georgia Institute of Technology \\ 
	\texttt{huan.xu@isye.gatech.edu} \\\\
	%\AND
	Shie~Mannor\\
	Department of Electrical Engineering\\
	Technion~--~Israel Institute of Technology\\
	\texttt{shie@ee.technion.ac.il} 
}
\begin{document}
\maketitle

\begin{abstract}
	We consider  the  problem of learning from noisy data in practical settings where the size of data is too large to store on a single machine. More challenging, the data coming from the wild may contain malicious outliers. To address the scalability and robustness issues, we present an \emph{online} robust  learning (ORL)  approach. ORL is simple to implement and has provable  robustness guarantee\textemdash in stark contrast to existing online learning approaches that are generally fragile to outliers. We  specialize the  ORL approach for two concrete cases: online robust principal component analysis and online  linear regression. We demonstrate the efficiency and  robustness advantages of  ORL through comprehensive simulations and predicting image tags on a large-scale data set. We also discuss  extension of the ORL to distributed learning and provide experimental evaluations.
\end{abstract}

\section{Introduction}
\label{sec:introduction}

% The very first letter is a 2 line initial drop letter followed
% by the rest of the first word in caps (small caps for compsoc).
% 
% form to use if the first word consists of a single letter:
% \IEEEPARstart{A}{demo} file is ....
% 
% form to use if you need the single drop letter followed by
% normal text (unknown if ever used by the IEEE):
% \IEEEPARstart{A}{}demo file is ....
% 
% Some journals put the first two words in caps:
% \IEEEPARstart{T}{his demo} file is ....
% 
% Here we have the typical use of a "T" for an initial drop letter
% and "HIS" in caps to complete the first word.
In the  era of big data, traditional statistical learning  methods are facing two  significant challenges: (1) how to scale   current machine learning methods to the large-scale data? And (2) how to obtain accurate inference results when the data are noisy and may even contain malicious outliers? These two important challenges naturally lead to a need for  developing   \emph{scalable robust} learning methods.

%In robust learning~\cite{huber2011robust}, a statistician receives $n+n_1$ samples of the form $\{\mathbf{x}_i\}_{i=1}^{n+n_1}$ for unsupervised learning or  $\{(\mathbf{x}_i,y_i)\}_{i=1}^{n+n_1}$ for supervised learning. Here $\mathbf{x}_i \in \mathbb{R}^p$ is an observation and $y_i \in \mathbb{R}$, if exists, is a real-valued response of $\mathbf{x}_i$. Among the $n+n_1$ data points, $n_1$ of them may be corrupted by gross noise or are possibly malicious outliers. The goal of robust learning is then to estimate the  parameter $\theta \in \Theta$ of interest, even though a constant fraction of outliers may exist in the data.

%An example {\em par excellence} of robust learning is robust principal component analysis (RPCA)~\cite{HR-PCA,candes2011robust,feng2012robust}. RPCA aims to estimate the low-dimensional subspace fitting the inliers $\{\mathbf{x}_i\}_{i=1}^{n}$ while being resistant to the negative effect of the outliers $\{\mathbf{x}_i\}_{i=n+1}^{n+n_1}$. Another example is robust linear regression (RLR)~\cite{chen2013robust,loh2012high,chen2013noisy}, where the inlier samples $(\mathbf{x}_i,y_i)$ can be described by the linear model $y_i=\theta^\top\mathbf{x}_i$. Similar to RPCA, RLR aims to estimate the regression parameter $\theta$ without being affected by the outliers. Notice that we use the term``inlier'' to denote authentic samples generated according to the underlying statistics rule, which does not necessarily imply these samples are closer to the origin.

Traditional robust learning methods generally rely on optimizing certain robust statistics~\cite{maronna1998robust,HR-PCA} or applying some sample trimming strategies~\cite{donoho1992breakdown}, whose calculations require loading all the samples into the memory or going through the data multiple times~\cite{feng2013stochastic}. Thus, the computational time of those robust learning methods is usually at least linearly dependent on the size of the sample set, $N$. 
For example, in RPCA~\cite{HR-PCA}, the computational time is $O(Np^2r)$ where $r$ is the intrinsic dimension of the subspace and $ p $ is the ambient dimension. In robust linear regression~\cite{chen2013noisy}, the computational time is super-linear on the sample size: $O(pN\log N)$. This rapidly increasing computation time becomes a major obstacle for applying robust learning methods to big data in practice, where the sample size easily reaches the terabyte or even petabyte scale.

Online learning and distributed learning are natural solutions to the scalability issue. Most of existing online statistical learning methods propose to optimize a surrogate function in an online fashion, such as employing stochastic gradient descent \cite{guan2012online,mairal2009online,feng2013online} to update the estimates, which however cannot handle the outlier samples in the streaming data \cite{huber2011robust}.   
Similarly, most of existing distributed learning approaches (\emph{e.g.}, MapReduce~\cite{dean2008mapreduce}) are not robust to  contamination from outliers, communication errors or computation node breakdown. %In this work, we adopt the median-of-estimates as a robust estimator. Such an estimator is also employed in some recent works \cite{hsu2014heavy,feng2014distributed}. However, none of the previous methods can be applied in an online learning setting.

%In recent years, along with the rapid  increase of data, distributed learning methods become popular and necessary. Among them, one of the most popular  is simply  map-reduce~\cite{dean2008mapreduce}  (aka.\ divide-and-fusion): The data are uniformly distributed over several parallel machines and the computation results from the machines are simply fused by taking their average. Such map-reduce framework is able to shorten the computation time by several orders with negligible communication cost. However, naively implementing the robust learning algorithms in such a map-reduce framework could destroy the robustness of the algorithm. Besides the existence of outliers, distributed learning itself demands robustness as latency or breakdown of computing nodes as well as communication errors are unavoidable.

In this work, we propose an online robust learning (ORL)  framework to efficiently process big data with outliers while preserving robustness and statistical consistency of the estimates. The core technique is based on two-level online learning procedure, one of which employs a novel  median filtering process. The robustness of median has been  investigated in statistical estimations for heavy-tailed distributions \cite{minsker2013geometric,hsu2013loss}. However, to our best knowledge, this work is among the first to  employ such estimator to deal with outlier samples in the context of online learning.

The implementation of ORL follows mini-batch based online optimization  which is popular in a wide range of machine learning problems (\emph{e.g.}, deep learning, large-scale SVM) from large-scale data.  Within each mini-batch, ORL computes an independent  estimate. However,   outliers may be heterogeneously distributed on the mini-batches and some of them may contain overwhelmingly many outliers. The corresponding estimate will be arbitrarily bad and break down the overall online learning. Therefore, on top of such streaming estimates  ORL performs another level of robust estimation\textemdash median filtering\textemdash to obtain reliable estimate.  The ORL approach is general and compatible with many  popular  learning algorithms.
Besides its obvious advantage of enhancing the computation efficiency for handling big data,  ORL incurs negligible robustness loss compared to  centralized (and computationally unaffordable) robust learning methods. In fact, we provide  analysis and demonstrate that ORL is robust to a constant fraction of ``bad'' estimates generated in the streaming mini-batches that are corrupted by outliers.

We specify  the  ORL approach for two concrete problems\textemdash online robust principal component analysis (PCA) and  linear regression. Comprehensive experiments on both synthetic and real large scale datasets  demonstrate the efficiency and robustness advantages of the proposed ORL approach. In addition, ORL can be adapted straightforwardly to distributed learning setting and offers additional robustness to corruption of several computation nodes or communication errors, as demonstrated in the experiments.

In short, we make following contributions in this work. First, we develop an outlier robust online learning framework which is the first one with provable robustness to a constant fraction of outliers. Secondly, we introduce two concrete online robust learning approaches, one for unsupervised learning and the other for supervised learning. Other examples can be developed in a similar way easily.  Finally, we also present the  application of the ORL approach to distributed learning setting which is equally attractive for learning from large scale data. 

\section{Preliminaries}

\subsection{Problem Set-up}	
We consider a set of $N{=}n{+}n_1$ observation samples $\mathcal{X} = \mathcal{X}_\mathcal{I} \cup \mathcal{X}_\mathcal{O} = \{\mathbf{x}_1,\ldots,\mathbf{x}_n\} \cup \{\mathbf{x}_{n+1},\ldots, \mathbf{x}_{n+n_1}\} \subset \mathbb{R}^{p}$, which contains a mixture of $n$ authentic samples $\mathcal{X}_\mathcal{I}$ and $n_1$ outliers $ \mathcal{X}_\mathcal{O} $.  The authentic samples are generated according to an underlying model (\emph{i.e.}, the ground truth)  parameterized by ${\theta}^\star \in \Theta$.  The target of a statistical learning procedure is to estimate the model parameter $ \theta^\star $ according to the provided observations $ \mathcal{X} $. Throughout the paper, we assume the authentic samples are sub-Gaussian random vectors in $ \mathbb{R}^p $, which thus satisfy that
\begin{equation}\label{eqn:subgaussian}
\mathbb{P}(|\langle \mathbf{x},\mathbf{u} \rangle |>t)\leq 2e^{-t^2/L^2}  \text{ for } t>0  \text{ and } \mathbf{u} \in {S}^{p-1},
\end{equation}
for some $ L $. Here ${S}^{p-1}$ denotes the unit sphere. 
%However, our results can easily extend to authentic samples with heavy-tailed distributions.

%For instance, for the problem of Principal Component Analysis (PCA), an authentic sample $ \mathbf{x} $ is   generated from a rank-$d $  matrix $\theta^\star \in\mathbb{R}^{p\times d}$ with $ d \ll p $ in the form of $\mathbf{x} = \theta^\star\mathbf{z} + {\varepsilon}$. Here $\mathbf{z}\in\mathbb{R}^{d}$ is the underlying signal and ${\varepsilon} \in\mathbb{R}^p$ denotes  additive noise to the observation. The target of PCA is to find a $ d $-dimensional  subspace of observations $ \mathbf{x} $, close to the column space of $ \theta^\star $. Another example of statistical learning is the Linear Regression (LR) problem, where the observations are pairs of covariate-response $(\mathbf{x},y)\in \mathbb{R}^p\times \mathbb{R}$. The authentic samples follow the linear model parameterized by $\theta^\star \in\mathbb{R}^p$ in the form of $y=\langle \theta^\star, \mathbf{x}\rangle + \varepsilon$, where $\varepsilon \in \mathbb{R}$ again denotes the additive noise. LR aims at learning a parameter $ \theta $, close to $ \theta^\star$, to best explain the observations and  provide accurate predictions for new samples.

In this work, we focus on the  case where a constant fraction of the observations are outliers, and  we use $\lambda \triangleq {n_1}/N$ to denote this  outlier fraction. 
In the context of online learning, samples are provided in a sequence of $ T $ mini batches, each of which contains $ b = \lfloor (n+n_1)/T \rfloor $ observations. Denote the sequence as $  \{\mathcal{X}_1,\ldots,\mathcal{X}_T\} = \{\mathbf{x}_1,\ldots,\mathbf{x}_b, \mathbf{x}_{b+1},\ldots, \mathbf{x}_{Tb}\} $. The target of online statistical learning is to estimate the parameter $ \theta^\star $, only based on the observations revealed so far. 
%More explicitly, let $ \widehat{\theta}_{t-1} $ denote the estimate of $ \theta^\star $ at the time instance $ (t-1) $, then the problem can be described as how to update $ \widehat{\theta}_{t-1} $ to $ \widehat{\theta}_t $ according to just revealed $ X_t $, such that the final output $ \widehat{\theta}_b $ has a bounded deviation from $ \theta^\star $, after seeing $ b $ mini batches.

%In this work, we consider the case where a fraction of the observations are arbitrary outliers, \emph{i.e.}, the revealed sample sequence till  time instance $ t $ is actually a mixing of authentic observation sequence $ \mathcal{X}_t $ and outlier sequence $ \mathcal{O}_t $, with $ |\mathcal{X}_t| + |\mathcal{O}_t| = tb $. Here we make no assumptions on the order of outliers entering the observation sequence, and in particular it may not be stationary. Thus, for a particular mini batch of size $ b $, the number of outliers can range from $ 0 $ to $ b $. 
%\begin{remark}
%	\label{remark:outlier-fraciton}
%	However, we do assume that the overall outlier fraction, $ \lambda_t = |\mathcal{O}_t|/(|\mathcal{X}_t| + |\mathcal{O}_t|) $, is asymptotically upper bounded by a constant $ \lim\sup_{t\rightarrow \infty} \lambda_t \leq \lambda^* $. Otherwise, there is no hope to recover any useful statistics from the observations, if the fraction of outliers is allowed to be arbitrarily large \cite{huber2011robust}.
%\end{remark}

\subsection{Geometric Median}
We first introduce the \emph{geometric median} here\textemdash a core concept underlying the median filtering procedure that is important for developing the proposed online robust learning approach.

%Geometric median, also called $ \ell_1 $-median and spatial median, is a direct generalization of the standard median proposed in~\cite{haldane1948note} and its properties have been studied in details in~\cite{kemperman1987median}. Geometric median can be defined even if the random variable does not have a finite first order moment and, most importantly, it has strong robustness properties with a breakdown of $ 0.5 $ \cite{huber2011robust}.
%More explicitly, even a half of the samples are outliers, the deviation between the sample median statistics and the ground truth can still be lower bounded.

%In particular, let $ \mathcal{H} $ be a separable Hilbert space, such as $ \mathbb{R}^d $ or $ L^2(I) $ for some closed interval $ I \subset \mathbb{R} $. We denote by $ \langle \cdot, \cdot \rangle $ its inner product and by $ \|\cdot\| $ the associated norm. The geometric median $ \widetilde{\theta} $ of a random variable $ \Theta $ taking values in $ \mathcal{H} $ is defined in~\cite{kemperman1987median}:
%\begin{equation*}
%\widetilde{\theta} := \arg\min_{y \in \mathcal{H}} \mathbb{E}_{\Theta} \left[\|\Theta-y\|-\|\Theta\|\right].
%\end{equation*}
%In practice, we usually consider the following empirical version of geometric median, where $ \Theta $ admits a uniform distribution on a collection of $ k $ atoms $ \theta_1,\ldots,\theta_k \in \mathcal{H} $ (which will later correspond to $ k $ individual estimations from $ k $ different machines)~\cite{minsker2013geometric}.

\begin{definition}[Geometric Median]
	\label{def:median}
	Given a finite collection of i.i.d.\ estimates $\theta_1,\ldots,\theta_T  \in \Theta$, their geometric median is the point which minimizes the total $\ell_1$ distance to all the given estimates, \emph{i.e.},
	\begin{equation}
	\label{eqn:median}
	\widehat{\theta} = \mathrm{median}(\theta_1,\ldots,\theta_T) := \underset{\theta \in \Theta}{\arg\min}\sum_{j=1}^T\|\theta -\theta_j\|.
	\end{equation}
\end{definition}
%Geometric median exists under rather general conditions.
%Calculating the geometric median is a convex optimization problem, where any off-the-shelf convex problem solver can be employed. Due to the space limitation, we omit the optimization details.

An important property of the geometric median is that it indeed aggregates a collection of independent estimates into a single estimate $\widehat{\theta}$ with strong concentration guarantees, even in presence of a constant fraction of outlying estimates in the collection.
The following lemma, straightforwardly derived from Lemma 2.1 in~\cite{minsker2013geometric}, characterizes such robustness property of the geometric median.
\begin{lemma}%[Robustness of the Geometric Median]
	\label{lemma:median}
	Let $\widehat{\theta}$ be the geometric median of the points $\theta_1,\ldots,\theta_T \in \Theta$. Fix $\gamma \in \left(0,\frac{1}{2}\right)$ and $C_\gamma = (1-\gamma)\sqrt{\frac{1}{1-2\gamma}}$. Suppose
	there exists a subset $J\subseteq \{1,\ldots,T\}$ of cardinality $|J|>(1-\gamma) T$ such that for all $j\in J$ and any point $\theta^\star\in \Theta$, $\|\theta_j -\theta^\star\| \leq r$. Then we have $\|\widehat{\theta} - \theta^\star\| \leq C_\gamma r$.
\end{lemma}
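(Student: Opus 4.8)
The plan is to invoke the first-order optimality condition of the convex objective $f(\theta)=\sum_{j=1}^{T}\|\theta-\theta_j\|$ that $\widehat{\theta}$ minimizes, and then to extract a scalar inequality by projecting that condition onto the unit direction $v:=(\widehat{\theta}-\theta^\star)/\|\widehat{\theta}-\theta^\star\|$. Write $\beta:=\|\widehat{\theta}-\theta^\star\|$. First I would dispose of the trivial case: since $(1-\gamma)^2\ge 1-2\gamma$ for all $\gamma$, we have $C_\gamma\ge 1$, so if $\beta\le r$ the conclusion $\beta\le C_\gamma r$ is immediate. Hence I may assume $\beta>r>0$, which in particular makes $v$ well defined.

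Next I would write down the optimality condition (treating $\widehat{\theta}$ as an interior minimizer over a convex $\Theta$, as is standard). Because $f$ is convex with minimizer $\widehat{\theta}$, we have $0\in\partial f(\widehat{\theta})$; for each $j$ with $\theta_j\neq\widehat{\theta}$ the gradient of $\|\cdot-\theta_j\|$ at $\widehat{\theta}$ is the unit vector $u_j:=(\widehat{\theta}-\theta_j)/\|\widehat{\theta}-\theta_j\|$, so in the differentiable case $\sum_{j=1}^{T}u_j=0$. Projecting onto $v$ gives the scalar identity $\sum_{j=1}^{T}\langle u_j,v\rangle=0$. The idea is then to lower-bound the summands: trivially $\langle u_j,v\rangle\ge-1$ for the outlying indices $j\notin J$, while for the good indices $j\in J$ I will show $\langle u_j,v\rangle$ is bounded well away from $-1$.

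The crucial step is this lower bound for $j\in J$. Taking $\theta^\star$ as the origin so that $\widehat{\theta}=\beta v$, each good estimate satisfies $\|\theta_j\|\le r$, and
\[
\langle u_j,v\rangle=\frac{\langle\widehat{\theta}-\theta_j,v\rangle}{\|\widehat{\theta}-\theta_j\|}=\frac{\beta-\langle\theta_j,v\rangle}{\|\beta v-\theta_j\|}.
\]
Minimizing the right-hand side over all $\theta_j$ in the ball of radius $r$ (the worst-case placement of a good estimate) is a short two-variable calculus problem whose optimum is attained on the sphere $\|\theta_j\|=r$ at the point where the ray from $\widehat{\theta}$ is tangent to that sphere; the minimal value is $\sqrt{\beta^2-r^2}/\beta$, the cosine of the half-angle of the cone from $\widehat{\theta}$ tangent to the ball. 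Feeding $\langle u_j,v\rangle\ge\sqrt{\beta^2-r^2}/\beta$ for $j\in J$ and $\langle u_j,v\rangle\ge-1$ otherwise into $\sum_j\langle u_j,v\rangle=0$ yields $|J|\,\sqrt{\beta^2-r^2}/\beta\le T-|J|$, i.e.\ $\sqrt{\beta^2-r^2}/\beta\le\rho$ with $\rho:=(T-|J|)/|J|$. Solving gives $\beta\le r/\sqrt{1-\rho^2}$, and substituting $|J|>(1-\gamma)T$ (so $\rho<\gamma/(1-\gamma)$ and $1-\rho^2>(1-2\gamma)/(1-\gamma)^2$) produces exactly $\beta\le(1-\gamma)\sqrt{1/(1-2\gamma)}\,r=C_\gamma r$.

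The hard part is precisely this geometric estimate: a careless bound that treats the numerator ($\le r-\beta$) and the denominator ($\le\beta+r$) of $\langle u_j,v\rangle$ separately only yields the weaker constant $1/(1-2\gamma)$, so attaining the sharp $C_\gamma$ forces the joint optimization (the tangent-cone argument) above. One remaining technicality is the non-differentiable case in which some outliers coincide with $\widehat{\theta}$: because every good estimate obeys $\|\widehat{\theta}-\theta_j\|\ge\beta-r>0$, no good index is ever tied, so only the harmless outlier subgradients (still of norm at most $1$) enter through the subdifferential selection, leaving the counting argument intact.
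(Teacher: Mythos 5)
Your proof is correct, including the key geometric step: the minimum of $(\beta-\langle\theta_j,v\rangle)/\|\beta v-\theta_j\|$ over the ball $\|\theta_j\|\le r$ is indeed $\sqrt{\beta^2-r^2}/\beta$ (a one-line Lagrangian or calculus check confirms the tangency point $\langle\theta_j,v\rangle=r^2/\beta$), and the substitution $\rho<\gamma/(1-\gamma)$ does yield exactly $C_\gamma$. The comparison with the paper is slightly unusual, however: the paper gives no proof of this lemma at all; it states that the lemma is ``straightforwardly derived from Lemma 2.1 in'' Minsker's work on the geometric median, and leaves it at that. What you have written is, in substance, a self-contained reconstruction of that cited lemma's proof in the Euclidean setting: Minsker's argument also rests on first-order optimality of the geometric-median objective combined with the same tangent-cone cosine bound, except that he phrases it contrapositively (assuming $\|\widehat{\theta}-\theta^\star\|>C_\gamma r$ and deriving a contradiction with $|J|>(1-\gamma)T$), whereas you argue directly from $0\in\partial f(\widehat{\theta})$ and solve for $\beta$. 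Your version buys self-containedness and makes the sharpness of the constant visible (your remark that the naive numerator/denominator bound only gives $1/(1-2\gamma)$ is exactly right); the paper's citation buys generality, since Minsker's lemma holds in Banach spaces where your inner-product projection is unavailable. One small technical point: your parenthetical ``treating $\widehat{\theta}$ as an interior minimizer'' is doing real work, since the lemma's $\Theta$ need not be all of $\mathbb{R}^p$; if $\widehat{\theta}$ lies on the boundary of a convex $\Theta$, you should replace $0\in\partial f(\widehat{\theta})$ by the directional-derivative condition $f'(\widehat{\theta};-v)\ge 0$ (the direction $-v$ toward $\theta^\star\in\Theta$ is feasible), which yields the same counting inequality $|J|\sqrt{\beta^2-r^2}/\beta\le T-|J|$ and leaves the rest of your argument untouched.
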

In words, given a set of points, their geometric median will be close to the ``true'' $\theta^\star$ as long as at least half of them are close to  $\theta^\star$. In particular, the geometric median will not be skewed severely even if some of the points deviate significantly away from  $\theta^\star$.

\section{Online Robust Learning}
In this section, we present how to scale up robust learning algorithms to process large-scale data (containing outliers) through online learning without losing robustness. We term the proposed approach as online robust learning (ORL).

The idea behind ORL is intuitive\textemdash instead of equally incorporating generated estimates at each time step, ORL aggregates the sequentially generated estimates by mini-batch based learning methods via an  \emph{online} computation of the robust geometric median. Basically,  ORL runs online learning at two levels: at the bottom level, ORL employs appropriate robust learning procedures $ \operatorname{RL}(\cdot, \nu) $ with parameter $\nu$ (\emph{e.g.}, robust PCA algorithms on a mini-batch of samples) to obtain a sequence of estimates $ \{\theta_1,\ldots,\theta_T\} $ of $ \theta^\star $ based on the observation mini-batch $ \mathcal{X}_1,\ldots,\mathcal{X}_T $; at the top  level, ORL updates the running estimate $ \widehat{\theta}_t $ ($1\leq t \leq T$)  through a   geometric median  filtering algorithm (explained later) over $\widehat{\theta}_1, \ldots, \widehat{\theta}_{t-1}$ and outputs a \emph{robust}  estimate after going through all the mini-batches. Intuitively, according to Lemma \ref{lemma:median}, as long as a majority of mini-batch estimates are not skewed by outliers, the produced $\widehat{\theta}_t$ would be robust and accurate. This new two-level robust learning gives ORL stronger robustness to outliers compared with ordinary  online learning.

To  develop  the top level geometric median filtering procedure, recall  definition of the geometric median in~\eqref{eqn:median}.
A natural estimate of the geometric median $\widehat{\theta}$ is the minimum $ \widehat{\theta}_T $ of the following empirical loss function $\widehat{G}_T$:
\begin{equation}
\label{eqn:emp_median}
\widehat{\theta}_T = \underset{\theta \in \Theta}{\arg\min}  \left\{\widehat{G}_T  \triangleq    \frac{1}{T}\sum_{i=1}^T \|\theta_i-\theta\|\right\}.
\end{equation}
The empirical function  $ \widehat{G}_T $ is differentiable everywhere except for the points $ \theta_i $,
and can be optimized  by applying  stochastic gradient descent (SGD) \cite{bottou1998online}. More concretely, at the time step $ t $, given a new estimate $\theta_{t+1}$ (based on the $(t{+}1)$-st mini-batch)  and the current  estimate $ {\theta}_t $, ORL computes the gradient at point $ \theta $ of the empirical  function $ \widehat{G}_T $  in Eqn.~\eqref{eqn:emp_median} evaluated only at $\theta_{t+1}$:
\begin{equation}
\label{eqn:sgdmed}
%\widehat{g}(\widehat{\theta};\theta_{t+1}) =  \frac{\partial \widehat{G}_n (\widehat{\theta};\theta_{t+1})}{\partial \widehat{\theta}} = \frac{2(\widehat{\theta}-\theta_{t+1})}{\|\widehat{\theta}-\theta_{t+1}\|_2}.
\widehat{g}(\theta;\theta_{t+1}) \triangleq  \frac{\partial \widehat{G}_T (\theta;\theta_{t+1})}{\partial \theta} = \frac{2(\theta-\theta_{t+1})}{\|\theta-\theta_{t+1}\|}.
\end{equation}
Then  ORL updates the   estimate $ \widehat{\theta}_t $  by following filtering: % $ -\hat{g}(\widehat{m}_t;\theta_{t+1}) $,
\begin{equation}
\label{eqn:online-median}
\widehat{\theta}_{t+1} \leftarrow \widehat{\theta}_{t} -  \eta_t \widehat{g}(\widehat{\theta}_{t};\theta_{t+1}) = (1-w_t) \widehat{\theta}_t + w_t {\theta}_{t+1}.
\end{equation}
Here $ \eta_t $ is a predefined step size parameter which usually takes the form of $ 1/c_a t $ with a constant $ c_a $ characterizing  convexity of the empirical function to optimize. Besides, $w_t = 2\eta_t/\|\widehat{\theta}_t - \theta_{t+1}\|$ controls contribution of each new estimate $\theta_{t+1}$  conservatively in  updating the global estimate $\widehat{\theta}$.
%\begin{remark}
%	\label{remark:illed-gradient}
%	When $ \widehat{\theta}_{t} =\theta_{t+1} $, the gradient defined in~\eqref{eqn:sgdmed} is not valid. For this case, we simply set $  \widehat{g}(\widehat{\theta}_t;\theta_{t+1}) = 0 $. It is easy to show zero is a sub-gradient of $ \widehat{G}_b $ at the point $ \widehat{\theta}_t $ \cite{cardot2013efficient}. Then $ \widehat{\theta}_{t+1}=\widehat{\theta}_t $ and the estimate does not change at this step.	
%\end{remark}
%The stochastic gradient $ \widehat{g}(\widehat{\theta}_t) $ is an ``estimate'' of the gradient $ g $ of $ G $ at $ \widehat{\theta}_t $ since the conditional expectation given the filtration $ \mathcal{F}_t = \{\theta_1,\dots,\theta_t, \widehat{\theta}_1, \dots, \widehat{\theta}_{t-1}\} $ satisfies
%%\begin{equation*}
%$\mathbb{E}_\theta \left[\widehat{g}(\widehat{\theta}_t) | \mathcal{F}_t\right] = g(\widehat{\theta}_t)$,
%%\end{equation*}
%where $ g(\widehat{\theta}_t) $ is the gradient of the population loss function $ G(u) $ at $ \widehat{\theta}_t $. Therefore,  standard martingale arguments  can be applied, if we can show the strong convexity of the function $ \widehat{G}_b $,   to establish the convergence property of the online geometric median algorithm \cite{bottou1998online,rakhlin2011making}. 
Details of ORL are provided in Algorithm~\ref{alg:ROL}.
\begin{algorithm}[t]
	\caption{The ORL Approach}
	\label{alg:ROL}
	\begin{algorithmic}
		\STATE \textbf{Input}: %Mini-batch robust learning algorithm $\mathrm{RL}$ and its parameter $ \nu $. 
		 Mini-batch sequence $\mathcal{X}_1,\ldots,\mathcal{X}_T$,
		% SGD step size parameter $\eta_0$, suffix averaging parameter $ \alpha \in (0,1) $,
		convexity parameter $ c_a $,  robust learning procedure parameter $\nu$.
		%of geometric median function (refer to Eqn.~\eqref{eqn:func_G} and Definition \ref{def:strong-convex}).
		\STATE \textbf{Initialization}: $ \widehat{\theta}_0= 0  $.
		\FOR{$t=1,\ldots,T$}
		\STATE Call the robust learning procedure $\theta_t = \mathrm{RL}(\mathcal{X}_t, \nu)$;
		\STATE Compute weight $w_t = 2\eta_t/\|\widehat{\theta}_{t-1} - \theta_{t}\|$ with $\eta_t = 1/c_a t$.
		%\STATE Step size: $\eta_t = 1/{\beta t}$,
		\STATE Update the estimate: $\widehat{\theta}_{t}=(1-w_t)\widehat{\theta}_{t-1} + w_t \theta_{t+1}$.
		\ENDFOR
		\STATE \textbf{Output}:  Final estimate $\widehat{\theta}_T$. %= \frac{\widehat{\theta}_{(1-\alpha)n+1} + \ldots + \widehat{\theta}_n }{\alpha n}$.
	\end{algorithmic}
\end{algorithm}
Another level of filtering is important. Certain mini-batches may contain overwhelming outliers. Therefore, even though a robust learning procedure is employed on each mini-batch, the resulted estimate cannot be guaranteed to be accurate. In fact, a mini-batch containing over $50\%$ outliers would corrupt any robust learning procedure\textemdash the resulted estimate can be arbitrarily bad and breakdown the overall online learning. To address this critical issue,  ORL performs another level of online learning for updating the ``global'' estimate with adaptive weights for the new estimate and ``filters out'' possibly bad estimates.

\section{Performance Guarantees}
We provide the performance guarantees for  ORL in this section. 
Throughout this section, we use following asymptotic inequality notations: for positive numbers $ a $ and $ b $, the asymptotic inequality $ a \lesssim_{p,q} b $ means that $ a \leq C_{p,q}b $ where $C_{p,q}$ is a constant only depending on $p,q$.
	Suppose $ N $ samples, a constant fraction of which are authentic ones and have sub-Gaussian distributions as specified in \eqref{eqn:subgaussian} for some $ L $, are evenly divided to $ T $ mini-batches  and  outlier fractions on the $ T $ mini-batches are $\lambda_1,\ldots,\lambda_T $ respectively. Let $ {\theta}_1, \ldots, {\theta}_T $ be a collection of independent estimates of $ \theta^\star $ output by implementing the robust learning procedure $\mathrm{RL}(\cdot,\nu)$ on the $ T $  mini-batches independently. We assume an estimate or the robust learning procedure provides following composite deviation bound,
	\begin{equation}
	\label{eqn:bound_base_alg}
	\mathbb{P}\left(\Norm{{\theta}_i - \theta^\star}  \lesssim_{\delta,L} \sqrt{\frac{1}{b}} + \frac{\lambda_i}{1-\lambda_i}\sqrt{p}   \right) \geq 1-\delta,
	\end{equation}	
	where $b$ is the size of each mini-batch whose value can be tuned by the desired accuracy (\emph{e.g.}, through data augmentation). We will specify value of the constant depending on $\delta$ and $L$ explicitly in concrete applications. The above bound indicates the estimation error depends on the standard statistically error and the outlier fraction.  If $\lambda_i$ is overwhelmingly large, the estimate will be arbitrarily bad.

We now proceed to demonstrate that the ORL approach is robust to outliers\textemdash even on a constant fraction of mini-batches, the obtained estimates are not good, ORL can still provide reliable estimate with bounded error.  Given a sequence of estimates $\theta_1,\ldots,\theta_T$ produced internally in ORL,  we analyze and provide guarantee on performance of the ORL through following two steps. We first demonstrate the geometric median function $ G_T(\theta) $ is in fact strongly convex and thus geometric median filtering  provides a good estimate of the ``true'' geometric median of $\theta_1, \ldots, \theta_T$. Then we  derive following performance guarantee for ORL by invoking the robustness property of geometric median.
\begin{proposition}%[performance guarantee for ORL]
	\label{prop:rol}
	Suppose  in total $ N $ samples, a constant fraction of  which have sub-Gaussian distribution as in \eqref{eqn:subgaussian}, are divided into $ T $ sequential mini batches of size $b$ with outlier fraction $\lambda_1,\ldots,\lambda_T$. Here $ T \geq 4 $.   We run a  base robust learning algorithm having a deviation bound as in \eqref{eqn:bound_base_alg} on each mini batch. Denote the ground truth of the parameter to estimate as $ \theta^\star $ and the output of ORL (Alg.~\ref{alg:ROL}) as $ \widehat{\theta}_T $. Then with probability at least $1-\delta$, $ \widehat{\theta}_T $ satisfies:
	\begin{equation*}
	\|\widehat{\theta}_T - \theta^\star\| \lesssim_{\delta,L, p, \gamma}  \frac{ \log(\log(T))+1}{T} +  \sqrt{\frac{1}{b}} +  \lambda(\gamma)\sqrt{p}.
	\end{equation*}
	Here $\lambda(\gamma)=\lambda_{(1-\gamma)}/(1-\lambda_{(1-\gamma)})$  and  $\lambda_{(1-\gamma)}$ denotes the $ \lfloor (1-\gamma) T \rfloor $ smallest outlier fraction in $ \{\lambda_1,\ldots,\lambda_T\} $ with $ \gamma \in [0,1/2) $.
\end{proposition}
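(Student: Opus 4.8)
The plan is to split the total error by the triangle inequality into an optimization error and a statistical/robustness error. Writing $\theta_G$ for the geometric median of $\theta_1,\dots,\theta_T$, i.e.\ the minimizer of the objective $\widehat{G}_T$ in \eqref{eqn:emp_median} that the filtering step \eqref{eqn:online-median} descends on, I would bound
\begin{equation*}
\|\widehat{\theta}_T - \theta^\star\| \le \underbrace{\|\widehat{\theta}_T - \theta_G\|}_{\text{optimization error of SGD}} + \underbrace{\|\theta_G - \theta^\star\|}_{\text{median vs.\ ground truth}}.
\end{equation*}
The first summand will produce the $\tfrac{\log\log T + 1}{T}$ term and the second will produce the $\sqrt{1/b} + \lambda(\gamma)\sqrt{p}$ terms; adding them reproduces the claimed bound, with all multiplicative constants folded into the $\lesssim_{\delta,L,p,\gamma}$ notation.

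For the robustness term I would first isolate a reliable index set. Sorting the batches by their outlier fraction, let $J$ collect the $\lfloor(1-\gamma)T\rfloor$ batches with the smallest $\lambda_i$, so that every $j\in J$ obeys $\tfrac{\lambda_j}{1-\lambda_j}\le \lambda(\gamma)$ by the definition of $\lambda_{(1-\gamma)}$. Applying the per-batch deviation bound \eqref{eqn:bound_base_alg} with a rescaled confidence $\delta' = \delta/\lfloor(1-\gamma)T\rfloor$ and taking a union bound over $J$, with probability at least $1-\delta$ every estimate $\theta_j$ for $j\in J$ satisfies $\|\theta_j - \theta^\star\| \lesssim_{\delta,L} r$ with $r \triangleq \sqrt{1/b} + \lambda(\gamma)\sqrt{p}$; the $\delta'$ rescaling only inflates the hidden constant, which is harmless. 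Since $|J| = \lfloor(1-\gamma)T\rfloor$ essentially meets the cardinality hypothesis $|J|>(1-\gamma)T$ of Lemma~\ref{lemma:median} (the discrepancy between $\lfloor(1-\gamma)T\rfloor$ and $(1-\gamma)T$ is a routine floor adjustment absorbed into $C_\gamma$), Lemma~\ref{lemma:median} applies to $\theta_G$ and yields $\|\theta_G - \theta^\star\| \le C_\gamma r$.

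For the optimization term I would treat \eqref{eqn:online-median} as projected SGD on $\widehat{G}_T$ over the domain $\Theta$. Two ingredients are needed. First, the stochastic gradient \eqref{eqn:sgdmed} has norm exactly $2$, so the bounded-gradient requirement of strongly convex SGD analyses is met for free. Second, and more delicately, one must exhibit a strong-convexity constant $c_a$ for $\widehat{G}_T$: each summand $\|\theta - \theta_i\|$ has Hessian $\tfrac{1}{\|\theta-\theta_i\|}(I - u_iu_i^\top)$ with $u_i$ the unit vector toward $\theta_i$, which is only positive semidefinite and degenerate along $u_i$, so strong convexity of the average survives only when the directions $u_i$ do not align and the argument is confined to a bounded region. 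Granting this, I would verify the iterates stay in a compact neighborhood (using $\|\widehat{g}\|=2$ and the step sizes $\eta_t=1/(c_at)$) and then invoke a high-probability convergence theorem for strongly convex SGD with bounded gradients; the iterated logarithm $\log\log T$ is the signature of the high-probability (rather than in-expectation) rate, arising from a Freedman-type martingale concentration applied over dyadically peeled time blocks.

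The main obstacle is establishing the uniform strong-convexity constant $c_a$ for the geometric-median objective: because every distance term is flat along its radial direction and the whole objective flattens as $\|\theta\|\to\infty$, curvature is genuinely local, so one must simultaneously pin the iterates to a compact set and quantify $c_a$ in terms of the spread of $\theta_1,\dots,\theta_T$ so that the degenerate directions of the individual Hessians cannot conspire to vanish. Obtaining the sharp $\log\log T$ high-probability rate, as opposed to the routine $O(1/T)$ in-expectation rate, is the secondary technical hurdle and is where the martingale concentration must be carried out with care.
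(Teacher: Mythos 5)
Your overall decomposition is exactly the paper's: split $\|\widehat{\theta}_T-\theta^\star\|$ via the triangle inequality through the geometric median $\widetilde{\theta}$ of $\theta_1,\ldots,\theta_T$, bound the optimization term by strongly convex SGD and the statistical term by the robustness of the median. Your treatment of the optimization term also matches the paper: strong convexity of the geometric-median objective is indeed the delicate point, and the paper handles it precisely the way you anticipate\textemdash by imposing Conditions \ref{cond1} and \ref{cond2} and importing the strong-convexity result of \cite{cardot2013efficient} (Theorem \ref{theo:convexity}), then invoking the high-probability rate for strongly convex SGD from \cite{rakhlin2011making} (Theorem \ref{theo:converge_rate}), which is exactly where the $\log\log T$ factor originates.

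The gap is in your robustness term. You control $\|\theta_G-\theta^\star\|$ by a union bound over the $\lfloor(1-\gamma)T\rfloor$ good batches at rescaled confidence $\delta'=\delta/\lfloor(1-\gamma)T\rfloor$, claiming the rescaling ``only inflates the hidden constant, which is harmless.'' Under the paper's notation it is not harmless: the constant in \eqref{eqn:bound_base_alg} depends on the confidence level, so after replacing $\delta$ by $\delta/T$ it depends on $T$ (for sub-Gaussian-type bounds it grows like $\log(T/\delta)$ or its square root). The proposition asserts a bound whose hidden constant depends only on $(\delta,L,p,\gamma)$; your route proves a strictly weaker statement carrying extra $\log T$ factors on the $\sqrt{1/b}$ and $\lambda(\gamma)\sqrt{p}$ terms. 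The paper avoids this with Lemma \ref{lemma:median_aggregation} (coupling, Lemma \ref{lemma:coupling}, plus Hoeffding, Lemma \ref{lemma:Hoeffding}): fix a \emph{constant} per-batch confidence $p^*>1/2$ in \eqref{eqn:bound_base_alg}, so the radius $r\lesssim_{\delta,L}\sqrt{1/b}+\lambda(\gamma)\sqrt{p}$ has a $T$-free constant; then the number of estimates falling outside radius $r$ is stochastically dominated by a binomial, so with probability at least $1-\exp\{-2T(\gamma-1+p^*)^2\}$ a majority of the $T$ estimates lie within $r$, and Lemma \ref{lemma:median} applies with factor $C_\gamma$. This is the median-as-confidence-booster idea you are missing: the geometric median does not require all good-batch estimates to succeed simultaneously (which is what a union bound buys, at a $T$-dependent price); it only requires a majority, and the probability of losing the majority decays exponentially in $T$ at no cost in the radius. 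With that substitution the rest of your argument goes through. (A minor separate point: $|J|=\lfloor(1-\gamma)T\rfloor$ does not literally satisfy the hypothesis $|J|>(1-\gamma)T$ of Lemma \ref{lemma:median}; apply it with any $\gamma'\in(\gamma,1/2)$, which only replaces $C_\gamma$ by $C_{\gamma'}$\textemdash that part really is routine.)
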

The above results demonstrate the estimation error  of ORL consists of two components. The first term accounts for the deviation between the solution $ \widehat{\theta}_T $ and the ``true'' geometric median of the $T$ sequential estimations. When $ T$ is sufficiently large, \emph{i.e.}, after ORL seeing sufficiently many mini batches of observations, this error vanishes at a rate of $ O(\log\log(T)/T) $. The second term explains the deviation of geometric median of estimates from the ground truth. The significant part of this result is that the error of ORL only depends on the $\lfloor (1-\gamma) T \rfloor$ smallest outlier fraction among $ T $ mini-batches, no matter how severely the other estimates are corrupted. This explains why ORL is \emph{robust} to outliers in the samples.

\section{Application Examples}
In this section, we provide two concrete examples of the ORL approach, including one unsupervised learning algorithm\textemdash principal component analysis (PCA) and one supervised learning one\textemdash  linear regression (LR). Both  algorithms are  popular in practice but their online learning versions with robustness guarantees are still absent. Finally, we also discuss an extension of ORL for distributed robust learning. 
\subsection{Online Robust PCA}

Classical PCA is known to be fragile to outliers and many robust PCA methods have been proposed so far (see~\cite{HR-PCA} and  references therein). However, most of those methods require to load all the data into memory and have computational cost (super-)linear in the sample size, which prevents them from being applicable for big data. In this section, we first develop a new robust PCA method which robustifies PCA via  a robust sample covariance matrix estimation, and then demonstrate how to implement it with the ORL approach to enhance the efficiency.

Given a sample matrix $X=[\mathbf{x}_1,\mathbf{x}_2,\ldots,\mathbf{x}_n] \in \mathbb{R}^{p \times n}$, the standard covariance matrix is computed as $C=XX^\top$, \emph{i.e.}, $C_{ij} = \langle X_i, X_j\rangle,\forall i,j=1,\ldots,p$. Here $X_i$ denotes the $i$th row vector of matrix $X$. To obtain a robust estimate of the covariance matrix, we replace the vector inner product by a trimmed inner product, $\widehat{C}_{ij} = \langle X_i, X_j\rangle_{n_1}$, as proposed in \cite{chen2013robust} for linear regressor estimation. Intuitively, the trimmed inner product removes the outliers having large magnitude and the remaining outliers are bounded by inliers. Thus, the obtained covariance matrix, after proper symmetrization, is close to the authentic sample covariance. How to calculate the trimmed inner product  for a robust estimation of sample covariance matrix is given in Algorithm \ref{alg:trimmed_ip}. 

\begin{algorithm}[h]
	\caption{Trimmed inner product $\langle\mathbf{x},\mathbf{x}' \rangle_{n_1}$}
	\label{alg:trimmed_ip}
	\begin{algorithmic}
		\STATE {\bfseries Input:} Two vectors $\mathbf{x} \in \mathbb{R}^N$ and $\mathbf{x}' \in \mathbb{R}^N$, trimmed parameter $n_1$.
		\STATE Compute $q_i = \mathbf{x}_i\mathbf{x}'_i, i=1,\ldots,N$.
		\STATE Sort $\{|q_i|\}$ in ascending order and select the smallest $(N-n_1)$ ones.
		\STATE Let $\Omega$ be the set of selected indices.
		\STATE {\bfseries Output:} $h=\sum_{i\in\Omega}q_i$.
	\end{algorithmic}
\end{algorithm}

Then we perform a standard eigenvector decomposition on the covariance matrix to produce the principal component estimations. 
The details of the new Robust Covariance PCA (RC-PCA) algorithm  are provided in Algorithm \ref{alg:rpca}.

\begin{algorithm}[h]
	\caption{Robust Covariance PCA (RC-PCA)}
	\label{alg:rpca}
	\begin{algorithmic}
		\STATE {\bfseries Input:} Sample matrix $X = [\mathbf{x}_1,\ldots,\mathbf{x}_{N}] \in \mathbb{R}^{p\times N}$, subspace dimension $d$, outlier fraction $\lambda$.
		\STATE Compute the trimmed covariance matrix $\widehat{\Sigma}$:
		%\begin{equation*}
		$\widehat{\Sigma}_{ij} = \langle X^i,X^j \rangle_{\lambda N},\forall i,j=1,\ldots,p$.
		%\end{equation*}		
		\STATE Perform eigen decomposition on $\widehat{\Sigma}^\prime = (\widehat{\Sigma} + \widehat{\Sigma}^\top)/2$ and take the eigenvectors corresponding to the  $d$ largest  eigenvalues $\widehat{P}_\mathcal{U} = [\widehat{\mathbf{w}}_1,\ldots,\widehat{\mathbf{w}}_d]$.
		%\STATE Compute the projection matrix $\widehat{\mathrm{Proj}}_r = \widehat{W}_r \widehat{W}_r ^\top$.
		\STATE {\bfseries Output:} column subspace projector $\widehat{P}_\mathcal{U}$.
	\end{algorithmic}
\end{algorithm}

Applying the proposed ORL approach onto the above RC-PCA  develops a new online robust PCA algorithm, called ORL-PCA, as explained  in Algorithm \ref{alg:rol_pca}.
\begin{algorithm}[h!]
	\caption{ORL-PCA}
	\label{alg:rol_pca}
	\begin{algorithmic}
		\STATE \textbf{Input}: Sequential mini-batches $\mathcal{X}_1,\ldots,\mathcal{X}_T $ with size $b$, subspace dimension $ d $, {\sc RC-PCA} parameter $ q = 0.5 b $.
		\STATE \textbf{Initialization}: $\widehat{\Sigma}^{(0)} = 0 \in \mathbb{R}^{p \times p}$.
		\FOR{$t=1,\ldots,T$}
		\STATE Perform RC-PCA on $ \mathcal{X}_t $: $ P_\mathcal{U}^{(t)} = {\text{RC-PCA}}(\mathcal{X}_t;d,q) $;
		\STATE Compute covariance matrix: $ \Sigma^{(t)} = P_\mathcal{U}^{(t)} {P_\mathcal{U}^{(t)}}^\top $;
		\STATE Compute $w_t = 2\eta_t/{\|\widehat{\Sigma}^{(t-1)} - \Sigma^{(t)}\|}$ with $\eta_t = 1/c_a t$;
		%\STATE Step size $\eta_t = 1/t$,
		\STATE Update the estimate $\widehat{\Sigma}^{(t)} = (1-w_t)\widehat{\Sigma}^{(t-1)} + w_t \Sigma^{(t)}$.
		\ENDFOR
		\STATE \textbf{Output}:  $ \widehat{P}_\mathcal{U}^{(T)} =$ \sc{svd}$\left(\widehat{\Sigma}^{(T)}, d\right)$.
	\end{algorithmic}
\end{algorithm}

Based on the above result, along with Proposition \ref{prop:rol}, we provide the following performance guarantee for ORL-PCA.
\begin{theorem}
	\label{theo:rol_pca}
	Suppose samples are divided into $ T $ mini-batches of size $b$. Authentic samples satisfy the sub-Gaussian distribution with parameter $L$. Let $ \lambda(\gamma) = \lambda_{(1-\gamma)}/(1-\lambda_{(1-\gamma)})$ where $\lambda_{(1-\gamma)}$ is the $(1-\gamma)$ smallest outlier fraction out of the $T$ mini-batches.
	Let $ \widehat{P}_\mathcal{U}^{(T)} $ denote the projection operator given by ORL-PCA, and $ P_\mathcal{U}^\star $ denotes the projection operator to the ground truth $ d $ dimensional subspace. Then, with a probability at least $ 1-\delta $,  we have,
	\begin{equation*}
	\|\widehat{P}_\mathcal{U}^{(T)}  -P_\mathcal{U}^\star   \|_F 	 \leq C_a \frac{\log(\log(T)/\delta) + 1}{T} 
 + c_1p\sqrt{\frac{d\log(1/\delta) }{b}} + c_2 \lambda(\gamma)\sqrt{dp}.
	\end{equation*}
	Here $ C_a,c_1,c_2 $ are  positive constants.
\end{theorem}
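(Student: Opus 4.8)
\emph{Proof plan.} The plan is to obtain Theorem~\ref{theo:rol_pca} as a specialization of the generic guarantee in Proposition~\ref{prop:rol}, with the abstract estimates $\theta_t$ instantiated by the rank-$d$ projection matrices $\Sigma^{(t)} = P_\mathcal{U}^{(t)}{P_\mathcal{U}^{(t)}}^\top$ produced by {\sc RC-PCA} on the $t$-th mini batch, the target $\theta^\star$ instantiated by the ground-truth projector $\Sigma^\star = P_\mathcal{U}^\star {P_\mathcal{U}^\star}^\top$, and $\|\cdot\|$ taken as the Frobenius norm. Since {\sc ORL-PCA} (Alg.~\ref{alg:rol_pca}) runs exactly the online geometric-median filtering of Algorithm~\ref{alg:ROL} on the sequence $\{\Sigma^{(t)}\}$, the only problem-specific ingredient needed to invoke Proposition~\ref{prop:rol} is a single-mini-batch deviation bound for {\sc RC-PCA} of the form \eqref{eqn:bound_base_alg}. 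The bulk of the proof is therefore to establish such a bound; the aggregation and the final conversion from the filtered covariance to the reported subspace are then comparatively routine.

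First I would analyze {\sc RC-PCA} on one mini batch of size $b$ with outlier fraction $\lambda_t$, in two pieces. (i) \emph{Covariance estimation.} I would bound $\|\widehat{\Sigma}' - \Sigma^\star\|$ for the symmetrized trimmed covariance $\widehat{\Sigma}' = (\widehat{\Sigma}+\widehat{\Sigma}^\top)/2$. Entry-wise, each trimmed inner product concentrates on $\Sigma^\star_{ij}$: the sub-Gaussian tail \eqref{eqn:subgaussian} controls the fluctuation of the retained inliers at rate $\sqrt{\log(1/\delta)/b}$, while the trimmed-inner-product analysis of~\cite{chen2013robust} shows that the outliers surviving the trimming in Algorithm~\ref{alg:trimmed_ip} are dominated by the inlier scale and contribute a bias of order $\lambda_t/(1-\lambda_t)$. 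Aggregating these entry-wise guarantees by a union bound over the $p^2$ entries and passing to the operator/Frobenius norm yields a statistical term and a bias term whose dimensional dependence I track explicitly. (ii) \emph{Eigenspace perturbation.} Because $\Sigma^\star$ is a rank-$d$ orthogonal projector, its $d$-th and $(d{+}1)$-th eigenvalues equal $1$ and $0$, so its eigengap is bounded below by a constant; the Davis--Kahan $\sin\Theta$ theorem then converts the covariance error into a subspace error at the cost of a $\sqrt{d}$ factor, producing a per-batch bound of the shape \eqref{eqn:bound_base_alg}, namely $\|\Sigma^{(t)} - \Sigma^\star\|_F \lesssim_{\delta,L} p\sqrt{d\log(1/\delta)/b} + \frac{\lambda_t}{1-\lambda_t}\sqrt{dp}$.

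With the per-batch bound established, the estimates $\{\Sigma^{(t)}\}$ satisfy precisely the hypothesis \eqref{eqn:bound_base_alg} required by Proposition~\ref{prop:rol}, with the $T$-independent factors $p\sqrt{d}$ (statistical) and $\sqrt{dp}$ (outlier) playing the role of the constants hidden in $\lesssim_{\delta,L}$. Invoking Proposition~\ref{prop:rol} directly — whose proof already couples the robustness of the geometric median (Lemma~\ref{lemma:median}), which ensures that only the $\lfloor(1-\gamma)T\rfloor$ batches of smallest outlier fraction matter and hence replaces $\lambda_t$ by $\lambda(\gamma)$, with the SGD optimization error contributing the $C_a(\log(\log T/\delta)+1)/T$ term — yields the stated bound for $\|\widehat{\Sigma}^{(T)} - \Sigma^\star\|_F$. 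It then remains to pass to the reported subspace: since $\widehat{\Sigma}^{(T)}$, being an online geometric median of projectors, is itself only approximately idempotent, $\widehat{P}_\mathcal{U}^{(T)}$ is its top-$d$ eigenspace, and applying Davis--Kahan once more (again using that $\Sigma^\star$ has unit eigengap) gives $\|\widehat{P}_\mathcal{U}^{(T)} - P_\mathcal{U}^\star\|_F \lesssim \|\widehat{\Sigma}^{(T)} - \Sigma^\star\|_F$, which establishes the theorem.

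I expect the main obstacle to be step (i), the concentration of the trimmed covariance under adversarial contamination. Because Algorithm~\ref{alg:trimmed_ip} trims each coordinate pair according to the data-dependent magnitudes $|q_i|$, the retained index set $\Omega$ is random and correlated with the identity of the outliers, so one must argue that after discarding the largest $\lambda_t b$ products the surviving outliers cannot exceed the inlier scale, and then track how this coordinate-wise control aggregates into the $\frac{\lambda_t}{1-\lambda_t}$ dependence. Obtaining the sharp dimensional scaling of each term — in particular the $\sqrt{p}$ rather than $p$ in the bias contribution — is the most delicate part of this bookkeeping. A secondary point is to verify, via Weyl's inequality together with the bound from step (i), that the empirical eigengap of $\widehat{\Sigma}'$ stays bounded away from zero with high probability, which is what legitimizes the two Davis--Kahan invocations.
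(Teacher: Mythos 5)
Your proposal is correct and follows essentially the same route as the paper: the paper likewise splits the error into an SGD optimization term (bounded by Theorem~\ref{theo:converge_rate}) plus a geometric-median-to-ground-truth term (bounded via the median robustness of Lemma~\ref{lemma:median_aggregation} together with the per-batch RC-PCA guarantee of Theorem~\ref{theo:rpca}), and that per-batch guarantee is itself proved exactly as in your steps (i)--(ii), via the trimmed-covariance concentration of \cite{chen2013robust}, \cite{vershynin2012close} and the Davis--Kahan theorem. The one place you are actually more careful than the paper is the final step: the paper bounds $\|\widehat{P}_\mathcal{U}^{(T)} - \widetilde{P}_\mathcal{U}\|_F$ directly by Theorem~\ref{theo:converge_rate}, silently identifying the SVD output $\widehat{P}_\mathcal{U}^{(T)}$ with the filtered covariance iterate $\widehat{\Sigma}^{(T)}$, whereas you correctly insert the additional Davis--Kahan application (using the unit eigengap of the projector $\Sigma^\star$) needed to convert the bound on $\|\widehat{\Sigma}^{(T)} - \Sigma^\star\|_F$ into one on the extracted top-$d$ eigenspace.
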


\subsection{Online Robust Linear Regression}
We then showcase another example of the application of ORL\textemdash online robust regression. As aforementioned, the target of linear regression is to estimate the parameter $ \theta^\star $ of linear regression model $ y_i=\langle \theta^\star, \mathbf{x}_i \rangle + \varepsilon $ given the observation pairs $ \{\mathbf{x}_i,y_i\}_{i=1}^{n+n_1} $ where $ n_1 $ samples are corrupted. Here $ \varepsilon \in \mathcal{N}(0,\sigma_e) $ is additive noise. Similar to ORL-PCA, we use the robustified thresholding (RoTR) regression proposed in Algorithm \ref{alg:rotr} (ref. \cite{chen2013robust}) as the robust learning procedure for parameter estimation within each mini-batch. 
\begin{algorithm}[h]
	\caption{Base Robust Regression - RoTR}
	\label{alg:rotr}
	\begin{algorithmic}
		\STATE {\bfseries Input:} Covariate matrix $X = [\mathbf{x}_1,\ldots,\mathbf{x}_{n+n_1}] \in \mathbb{R}^{p\times (n+n_1)}$ and response $\mathbf{y}\in\mathbb{R}^{n+n_1}$, outlier fraction $\lambda$ (set as $ 0.5 $ if unknown).
		\STATE For $j=1,\ldots,p$, compute $\widehat{\theta}(j)=\langle \mathbf{y}, X_j \rangle_{\lambda(n+n_1)}$.
		\STATE {\bfseries Output:} $\widehat{\theta}$.
	\end{algorithmic}
\end{algorithm}

Due to the blessing of online robust learning framework, ORL-LR has the following performance guarantee. 

\begin{theorem}
	\label{theo:rol_sr}
	Adopt the notations in Theorem \ref{theo:rol_pca}.
	Suppose  the authentic samples have the sub-Gaussian distribution as in \eqref{eqn:subgaussian}  with noise level $ \sigma_e $, are divided into $ T $ sequential mini-batches. Let $ \widehat{\theta}_T $ be the output of ORL-LR and $ \theta^\star $ be the ground truth. Then, with probability at least $ 1-\delta $,  the following holds:
	\begin{equation*}
	%\begin{split}
	\|\widehat{\theta}_T-\theta^\star\|_2  \leq   C_a \frac{\log(\log(T)/\delta) + 1}{T} 
 + C_\gamma \|\theta^\star\|_2\sqrt{1+\frac{\sigma_e^2}{\|\theta^\star\|_2^2}}\Bigg(\sqrt{\frac{p\log (1/\delta)}{b}}  + \lambda(\gamma)\sqrt{p}\log (\frac{1}{\delta})\Bigg).
	%\end{split}
	\end{equation*}	
\end{theorem}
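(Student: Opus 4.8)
The plan is to follow exactly the two-stage template behind Proposition~\ref{prop:rol}: first establish that each single mini-batch estimate produced by RoTR (Algorithm~\ref{alg:rotr}) obeys a composite deviation bound of the form \eqref{eqn:bound_base_alg}, and then feed this bound into Proposition~\ref{prop:rol} to obtain the aggregated guarantee. Since Proposition~\ref{prop:rol} already converts any per-batch bound of the stated shape into the final $\frac{\log\log(T)+1}{T}+\sqrt{1/b}+\lambda(\gamma)\sqrt{p}$ expression, the whole task reduces to (i) pinning down the explicit prefactor and $\log(1/\delta)$ factors in RoTR's single-batch error, and (ii) tracking how these are carried through the geometric-median filtering.

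First I would analyze a single coordinate of the RoTR output on one mini-batch. For coordinate $j$ the estimator is the trimmed inner product $\widehat\theta(j)=\langle \mathbf{y},X_j\rangle_{\lambda b}$. Writing $y_i=\langle\theta^\star,\mathbf{x}_i\rangle+\varepsilon_i$, the untrimmed average $\frac{1}{b}\sum_i y_i x_{ij}$ is an unbiased estimate of $\theta^\star(j)$ under isotropic sub-Gaussian covariates, and I would control its fluctuation through a Bernstein-type inequality, using that each summand $y_i x_{ij}$ is a product of two sub-Gaussian variables and hence sub-exponential with scale $\propto \sqrt{\|\theta^\star\|_2^2+\sigma_e^2}$ (this is where the factor $\|\theta^\star\|_2\sqrt{1+\sigma_e^2/\|\theta^\star\|_2^2}$ enters), yielding a per-coordinate statistical error $\lesssim \sqrt{\log(1/\delta)/b}$. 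Next I would bound the perturbation caused by trimming, which has two sources: the bias from discarding a $\lambda_i$ fraction of authentic products, and the residual contribution of the outliers surviving trimming. Both are controlled by the magnitude of an authentic product near the trimming threshold, which by the sub-exponential tail is $\lesssim \log(1/\delta)$, giving an outlier-induced per-coordinate bias $\lesssim \frac{\lambda_i}{1-\lambda_i}\log(1/\delta)$. Combining these and taking a union bound over the $p$ coordinates, then passing from the per-coordinate ($\ell_\infty$) bound to $\|\widehat\theta_i-\theta^\star\|_2$, introduces the $\sqrt p$ factors and produces a single-batch bound that is precisely \eqref{eqn:bound_base_alg} with explicit prefactor $\|\theta^\star\|_2\sqrt{1+\sigma_e^2/\|\theta^\star\|_2^2}$.

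With this per-batch bound in hand, I would invoke Proposition~\ref{prop:rol} (equivalently, the strong convexity of the median objective together with the robustness of the geometric median in Lemma~\ref{lemma:median}). The SGD-based median filtering contributes the optimization term $C_a\frac{\log(\log(T)/\delta)+1}{T}$, while Lemma~\ref{lemma:median} guarantees that the output error is governed by the radius $r$ attained on the best $\lfloor(1-\gamma)T\rfloor$ mini-batches---those with the $\lfloor(1-\gamma)T\rfloor$ smallest outlier fractions---inflated only by the constant $C_\gamma$. Substituting $r$ from the single-batch bound, with $\lambda_{(1-\gamma)}$ replacing $\lambda_i$, then yields exactly the claimed $C_\gamma\|\theta^\star\|_2\sqrt{1+\sigma_e^2/\|\theta^\star\|_2^2}\big(\sqrt{p\log(1/\delta)/b}+\lambda(\gamma)\sqrt p\log(1/\delta)\big)$ term.

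The main obstacle I anticipate is step~(i): sharply controlling the trimmed inner product. Unlike a plain empirical average, trimming by magnitude couples the selected index set $\Omega$ to the very randomness being averaged, so the retained authentic summands are no longer independent and a concentration inequality cannot be applied naively. The clean way around this is a uniform deviation argument over all admissible trimming sets of size $(1-\lambda_i)b$, or an order-statistics comparison sandwiching the trimmed sum between the best- and worst-case index sets; showing that the worst such deviation is still $\lesssim\sqrt{\log(1/\delta)/b}$, while correctly isolating the $\sqrt{\|\theta^\star\|_2^2+\sigma_e^2}$ scale and the $\log(1/\delta)$ factor on the outlier bias, is the crux. Once the single-batch bound is secured, the aggregation through Proposition~\ref{prop:rol} is essentially mechanical.
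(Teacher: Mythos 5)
Your aggregation argument is exactly the paper's: the paper bounds $\|\widehat{\theta}_T-\theta^\star\|_2$ via the triangle inequality through the true geometric median $\widetilde{\theta}$ of the per-batch estimates, controls $\|\widehat{\theta}_T-\widetilde{\theta}\|$ with the SGD convergence rate of Theorem \ref{theo:converge_rate}, and controls $\|\widetilde{\theta}-\theta^\star\|_2$ by applying Lemma \ref{lemma:median_aggregation} to the per-batch RoTR errors --- precisely the Proposition \ref{prop:rol} template you invoke, with the same $C_\gamma$ inflation and the same replacement of $\lambda_i$ by $\lambda_{(1-\gamma)}$. The one genuine difference is your step (i): the paper never proves the single-batch RoTR bound at all. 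It imports it verbatim as Lemma \ref{lemma:RoTR}, a quoted result from \cite{chen2013robust}, so the difficulty you single out as the crux --- concentration of the trimmed inner product when the trimming set $\Omega$ is coupled to the randomness being averaged --- is handled entirely by citation and never appears in the paper's argument. Your sketch of that derivation (sub-exponential products yielding the $\|\theta^\star\|_2\sqrt{1+\sigma_e^2/\|\theta^\star\|_2^2}$ scale and the $\sqrt{\log(1/\delta)/b}$ statistical term, a trimming bias of order $\frac{\lambda_i}{1-\lambda_i}\log(1/\delta)$ per coordinate, and a union bound over the $p$ coordinates) is consistent with the shape of the cited lemma, but you leave the uniform-deviation-over-trimming-sets argument as a plan rather than a proof. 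So: if your standard is a self-contained proof, the base bound is where all the remaining work lies and your write-up is incomplete there; if, like the paper, you are willing to take the RoTR guarantee of \cite{chen2013robust} as given, your proof is complete and coincides with the paper's.
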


\subsection{Distributed Robust Learning}
\label{sec:drl}
Following the spirit of ORL, we can also develop a distributed robust learning (DRL) approach. Suppose in a distributed computing platform, $k$ machines are usable for parallel computation. Then for processing a large scale dataset, one can evenly distribute them onto the $k$ machines and run robust learning procedure $\text{RL}(\cdot, \nu)$ in parallel. Each machine provides  an independent estimate $\theta_i$ for the parameter of interest $\theta^\star$. Aggregating these estimates via geometric median (ref. Eqn.~\eqref{eqn:median}) can provide additional robustness to the inaccuracy, breakdown and communication error for a fraction of machines in the computing cluster, as stated in Lemma \ref{lemma:median}. Of particular interest, DRL can provide much stronger robustness than the commonly used averaging over the $k$ estimates, as average or mean is notoriously fragile to corruption. Even a single corrupted estimate out of the  $k$ estimates can make the final estimate arbitrarily bad.

\section{Simulations}
In this section, we investigate  robustness of the  ORL approach by evaluating the ORL-PCA and ORL-LR algorithms  and comparing them with their centralized and non-robust counterparts. We also perform similar investigation on DRL (ref. Section \ref{sec:drl}) considering robustness is also critical for distributed learning  in practice.
%Details on the scheme of generating synthetic data and more simulation results on comparison with division-and-averaging are provided in the supplementary material due to space limitation. 
In the simulations, we report the results with the outlier fraction which is computed as $ \lambda = n_1/(n+n_1) $.
%The numerical simulations are aimed to illustrate that the proposed DRL is able to significantly reduce the computational cost yet preserving the robustness of the base robust learning algorithms. When the outliers are not uniformly distributed, the DRL even enhances the breakdown point of base robust learning algorithms.
%In particular, we conduct simulations of distributed RPCA and robust linear regression (RLR) as the illustrating examples.

%\paragraph{Robust PCA}

%\paragraph{Experiment Setting} We conduct simulations with varying outlier fraction $\lambda$ from $ 0 $ to $ 0.7 $, in order to investigate the robustness of DRL with different sample contaminating degree. When $\lambda \leq 0.5$, outliers are uniformly distributed on the machines and thus the outlier fraction is around $\lambda$ on each machine. When $\lambda >0.5$, outliers are not uniformly distributed. Instead, when $\lambda=0.6$, on $\gamma = 50\%$ of the machines, the outlier fraction is $0.8$, while on the other $50\%$ machines the outlier fraction is $0.4$. Similarly, when $\lambda=0.7$, the outlier fractions are $0.9$ (on $50\%$ of the machines) and $0.5$ (on the other $50\%$ of the machines) respectively. These two adversarial cases are designed to demonstrate the additional robustness gain brought by DRL. All the simulations  are repeated for $10$ times. The average and variance of the estimation errors are plotted in Fig.~\ref{fig:drpca} and Fig.~\ref{fig:drlr} respectively.

\begin{figure*}[t]
	\centering

	\subfigure[Online PCA]{
		\label{subfig:pca_non_uniform}
		\includegraphics[width=0.22\textwidth]{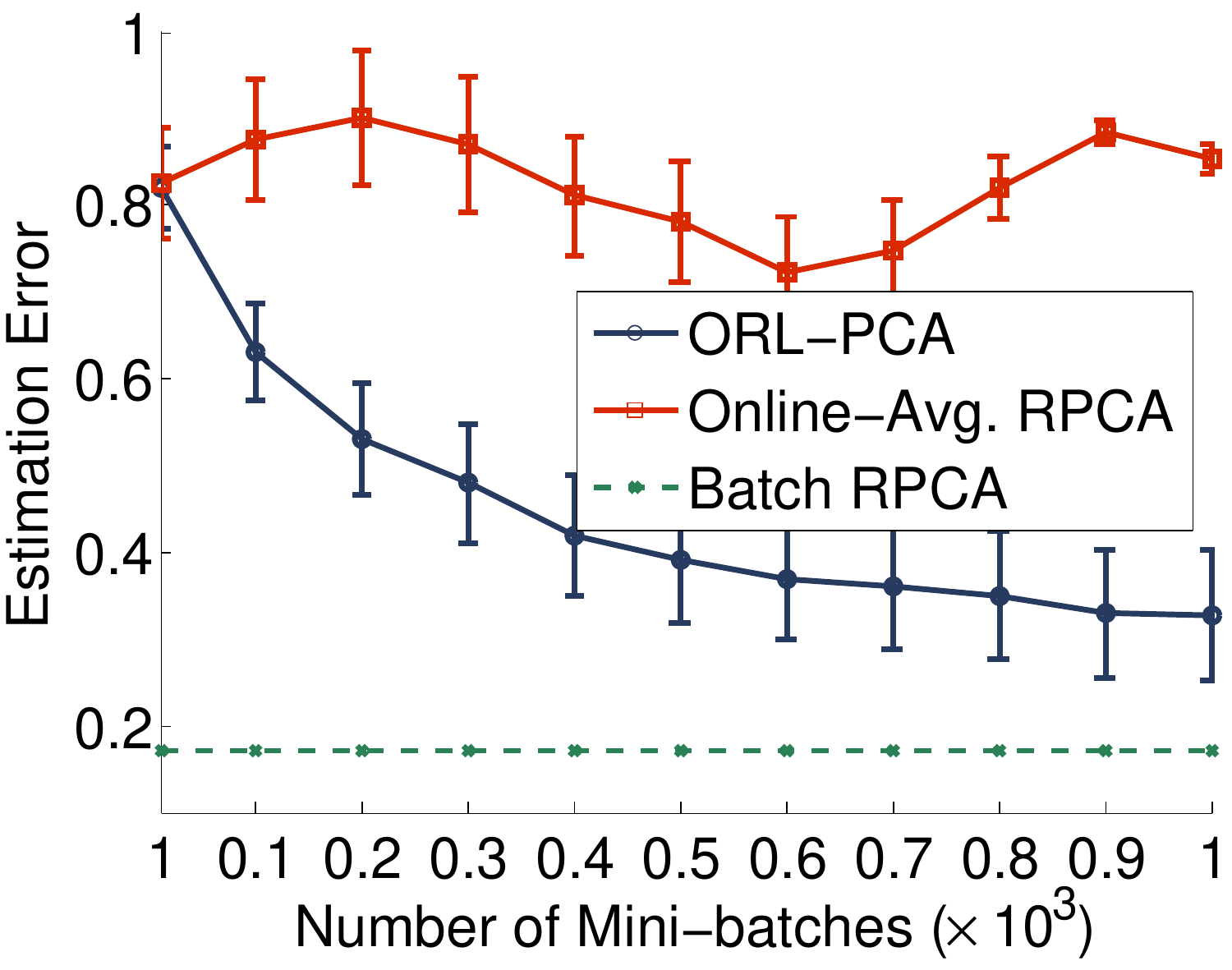}
	}	
	\subfigure[Online LR]{
		\label{subfig:online_lr}
		\includegraphics[width=0.22\textwidth]{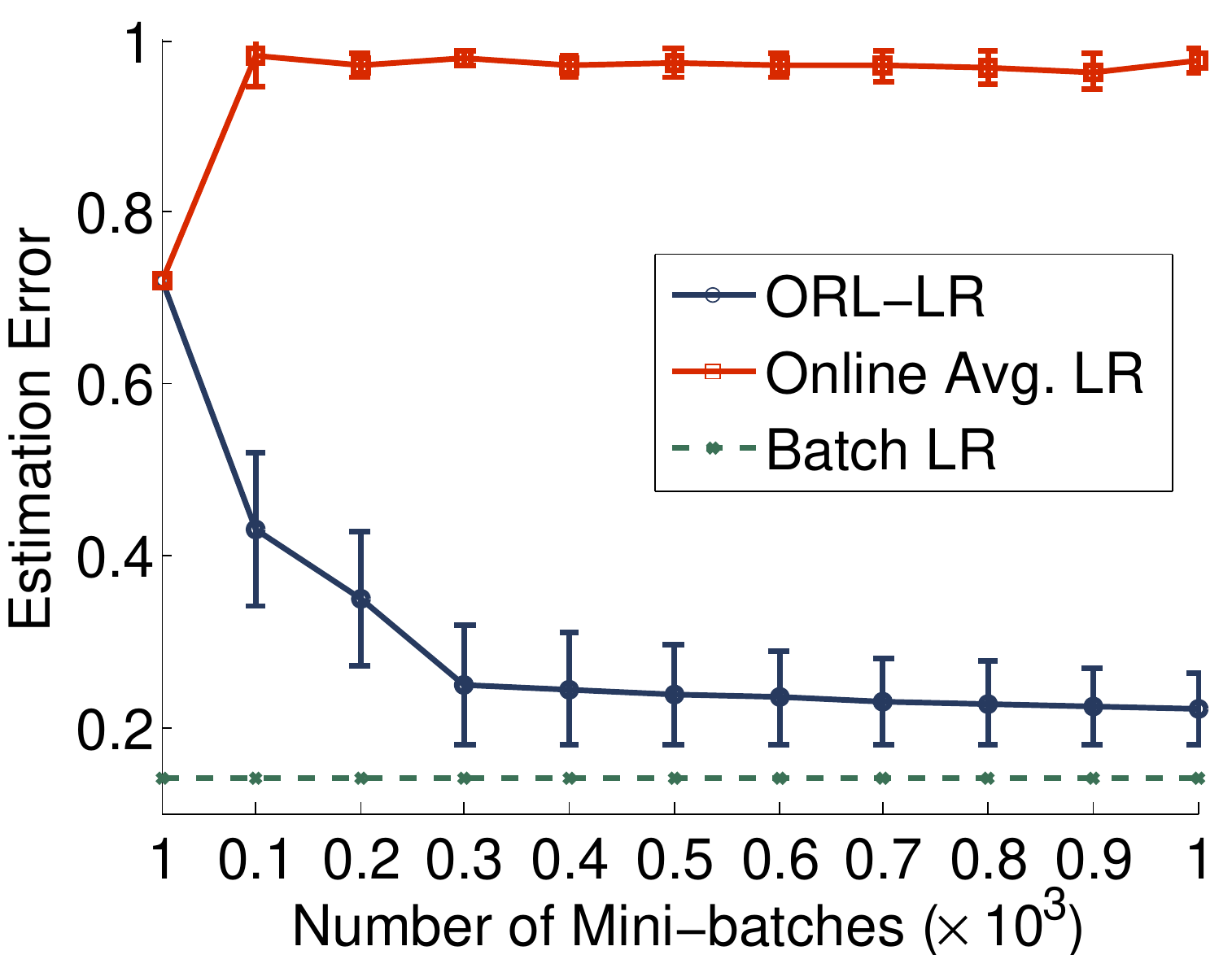}
	}
	\subfigure[Distributed PCA]{
	\label{fig:drpca}
	\includegraphics[width=0.22\textwidth]{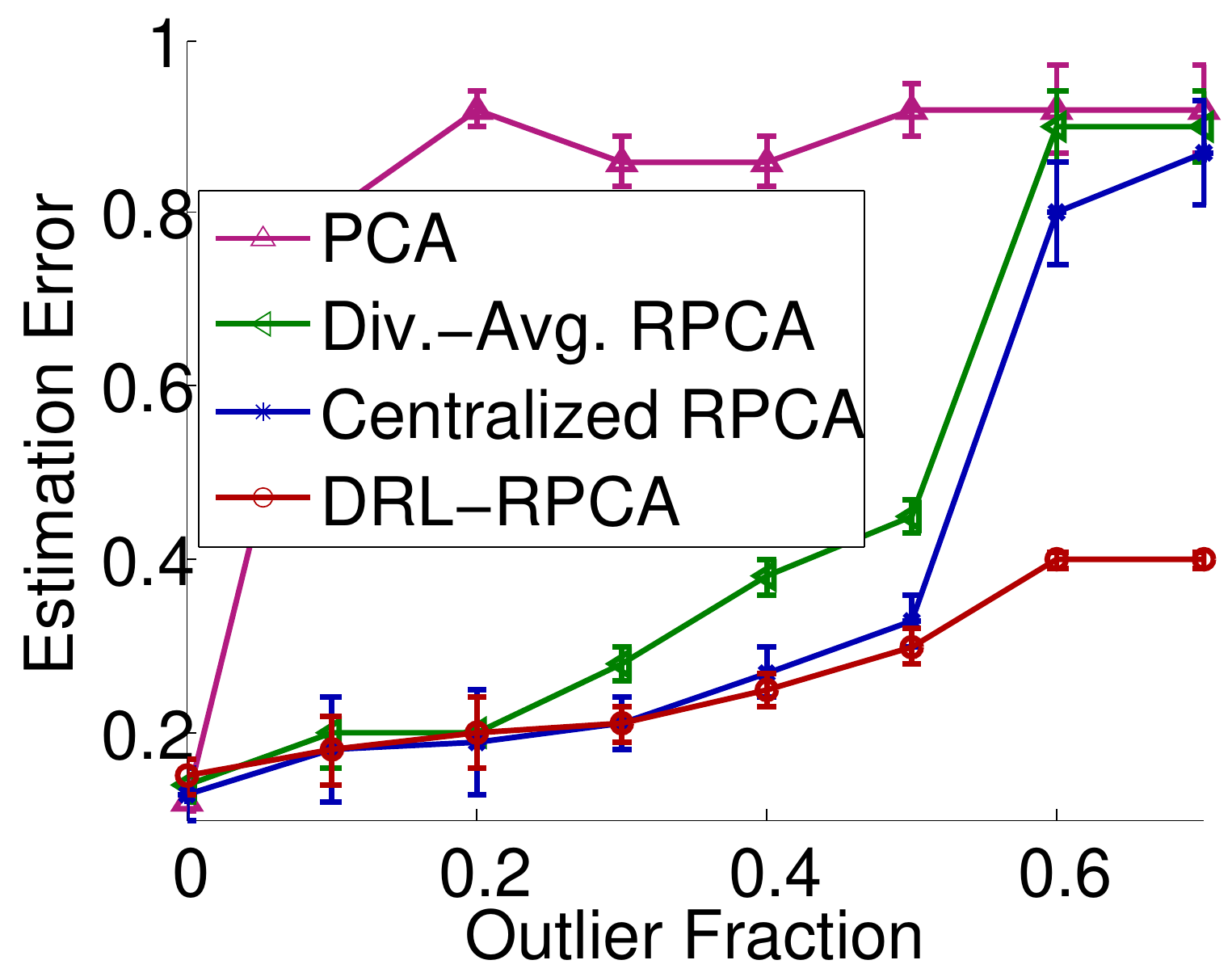}
}
\subfigure[Distributed LR]{
	\label{fig:drlr}
	\includegraphics[width=0.22\textwidth]{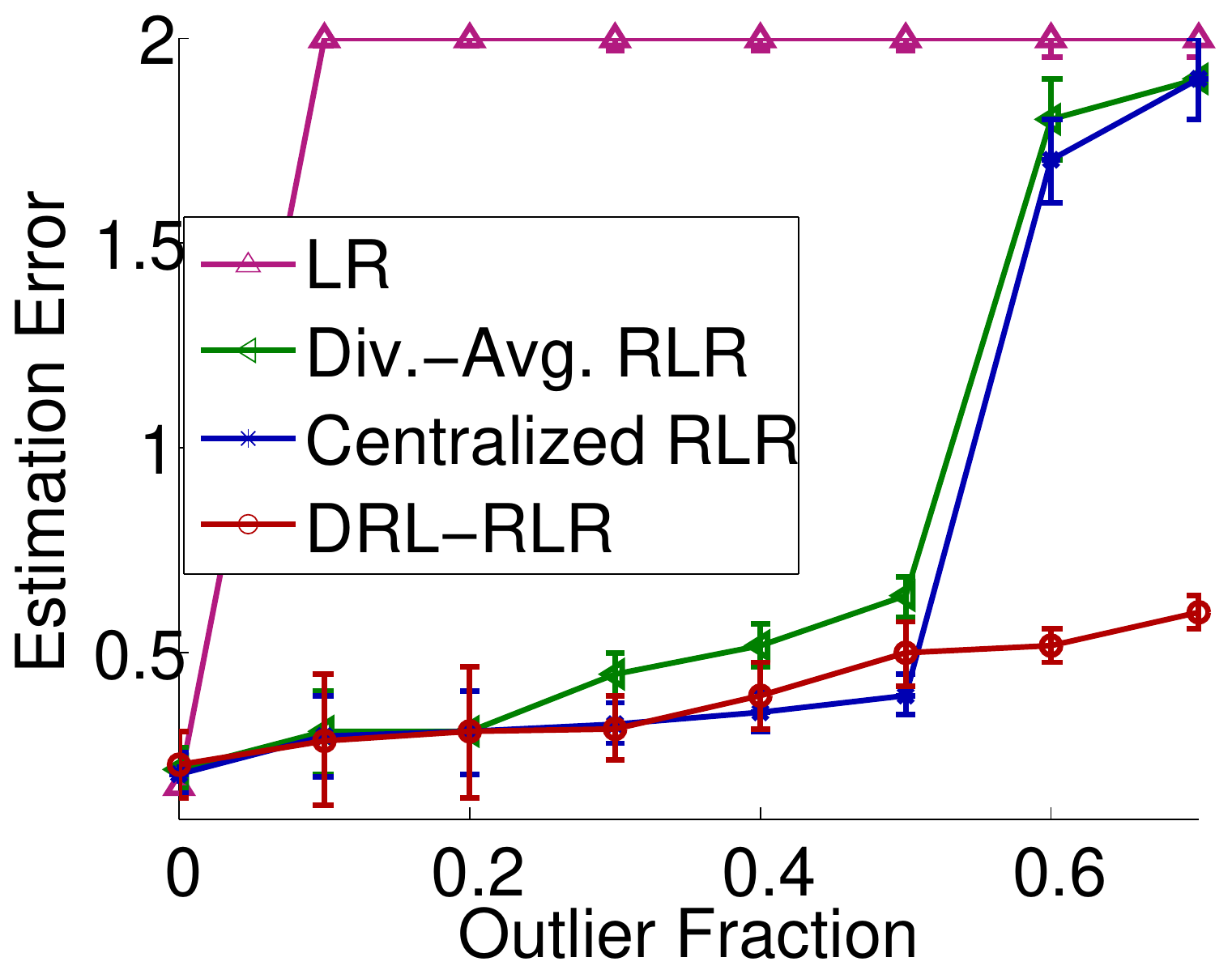}
}	
	\caption{Simulation comparison between online (in (a), (b)) and distributed as well as centralized (in (c), (d))  algorithms, along with  standard non-robust ones,  for  PCA and LR problems. Both problems have the following setting: noise level $\sigma_e = 1$, outlier magnitude $\sigma_o=10$, sample dimension $p=100$, sample size $N=1\times 10^{6}$, \# computation nodes $k=100$ (for distributed algorithms), and \# mini-batches $T=100$ (for online algorithms) . For PCA, intrinsic dimension $d=5$. (Best viewed in color.)
	}
	\label{fig:result}
	\vspace{-3mm}
\end{figure*}

\textbf{Data generation} In  simulations of the PCA problems, samples are generated according to $\mathbf{x}_i = \theta^\star\mathbf{z}_i + \varepsilon$. Here the signal $\mathbf{z}_i \in \mathbb{R}^d$ is sampled from the normal distribution: $\mathbf{z}_i \sim \mathcal{N}(0,I_d)$. The noise $\varepsilon \in \mathbb{R}^p$ is sampled as:  $\varepsilon \sim \mathcal{N}(0,\sigma_e I_p)$. The underlying matrix $\theta^\star \in \mathbb{R}^{p\times d}$ is randomly generated whose columns are then orthogonalized. The entries of outliers $\mathbf{x}_o \in \mathbb{R}^p$ are i.i.d.\ random variables from uniform distribution $[-\sigma_o,\sigma_o]$.
We use the distance between two projection matrices to measure the subspace estimation error for PCA: $\|\widehat{P}_\mathcal{U}-P_\mathcal{U}^\star\|_F/\|P_\mathcal{U}^\star\|_F$. Here $\widehat{P}_\mathcal{U}$ is the output estimates and $P_{\mathcal{U}}^\star=\theta^\star {\theta^\star}^\top$ is the ground truth.

In  simulations of the LR problems, samples $(\mathbf{x}_i,y_i)$ are generated according to $y_i = {\theta^\star}^\top \mathbf{x}_i + \varepsilon$. Here the model parameter $\theta^\star$ is randomly sampled from $\mathcal{N}(0,I_p)$ , and $\mathbf{x}_i \in \mathbb{R}^p$ is also sampled from normal distribution: $\mathbf{x}_i \in \mathcal{N}(0,I_p)$. The noise $\varepsilon \in \mathbb{R}$ is sampled as:  $\varepsilon \sim \mathcal{N}(0,\sigma_e)$. The entries of outlier $\mathbf{x}_o$ are also i.i.d.\ randomly sampled from uniform distribution $[-\sigma_o,\sigma_o]$. The response of outlier is generated by $y_o = -{\theta^\star}^\top \mathbf{x}_o + v$. We use $\|\theta^\star-\widehat{\theta}\|_2/\|\theta^\star\|_2$ to measure the error. Here $\widehat{\theta}$ is the output estimate.

\textbf{Online Setting} Results shown in Figure \ref{subfig:pca_non_uniform} give following observations. First, ORL-PCA converges to comparable performance with batch RC-PCA with accesses to  the entire data set. This demonstrates the rapid convergence of ORL-PCA. It is worth noting that ORL-PCA saves considerable memory cost than batch RC-PCA ($ 8 $ Mb \emph{vs.}\ $ 8\times 10^3 $Mb) and computation time ($ 212 $ seconds vs.\ $ \sim 27 $ Hours) since ORL-PCA performs SVD on much smaller data. Secondly, ORL-PCA offers much stronger robustness than naively averaged aggregation when outlier order is adversarial to corrupt a fraction of mini-batches. As shown in Figure \ref{subfig:pca_non_uniform}, when some batches have overwhelming outliers (outlier fraction $ \lambda_i \geq 0.5 $), base RC-PCA fails on these batches and outputs completely corrupted estimations. The corruption of mini-batches also fails online averaging RPCA. In contrast, ORL-PCA still offers correct estimation, even when a fraction of  $ 40\% $ of estimates from mini batches are corrupted.
% Figure \ref{subfig:pca_uniform} shows the results when the outliers are uniformly distributed on the batches, and HR-PCA succeeds on every batch. For this case, averaged aggregation provides slightly better results than ORL-PCA, but the advantages become minor after receiving enough samples.
We also report the results of ORL-LR and comparison with online-averaging baselines in Figure \ref{subfig:online_lr}. Similar to ORL-PCA, one observes that ORL-LR offers outperforming robustness to the sample outliers and batch corruptions, in contrast to the naive averaging algorithm.

%\begin{table}[t]
%	\centering
%	\caption{Tag prediction accuracy (Mean Average Precision) and computation time (in secs.) comparisons among ORL-SR, Online SR with Averaging (Online Avg.\ SR) and Stochastic gradient descent for SR (Stoc. SR) on the Flickr image set. The average and std of MAP's on $ 10 $ random training and test splits are reported. }
%	\label{tab:flickr-acc}
%	\begin{tabular}{c|c|c|c}
%		\hline
%		\hline
%		Alg. & ORL-SR & Online Avg. SR & Stoc. SR\\
%		\hline
%		MAP & $ 0.36 \pm 0.02 $ & $ 0.27 \pm 0.01 $ & $ 0.23 \pm 0.01 $ \\
%		\hline
%		Time &$ 4{,}320 \pm 10 $  & $ 3{,}960 \pm 9 $  & $ 2{,}880 \pm 12 $ \\
%		\hline
%	\end{tabular}
%	\vspace{-4mm}
%\end{table}

\textbf{Distributed setting} All the simulations are implemented on a PC with $2.83$GHz Quad CPU and $8$GB RAM. It takes centralized RPCA around $60$ seconds to handle $1\times 10^6$ samples with dimensionality of $100$. In contrast, distributed RPCA only costs $0.6$ seconds by using $k=100$ parallel procedures.
%The efficiency gain is from distributed RPCA partitioning the sample into $k=100$ subsets and processing them in parallel.
The communication cost here is negligible since only eigenvector matrices of small sizes are communicated. For RLR simulations, we also observe about efficiency  enhancement. 

%\paragraph{Robustness gain.}
As for the performance, from Fig.~\ref{fig:drpca}, we observe that when $\lambda \leq 0.5$, DRL-RPCA, RPCA with division-averaging (Div.-Avg.\ RPCA) and centralized RPCA (\emph{i.e.}, the RC-PCA) achieve similar performances, which are much better than non-robust standard PCA. When $ \lambda = 0 $, \emph{i.e.}, when there are no outliers, the performances of DRL-RPCA and Div.-Avg.\ RPCA are slightly worse than standard PCA as the  quality of each mini-batch estimate deteriorates  due to the smaller sample size. However, distributed algorithms of course offer significant higher efficiency. Similar observations also hold for LR simulations from Fig.~\ref{fig:drlr}. Actually, standard PCA and LR begin to break down when $\lambda =0.1$. These results demonstrate that DRL preserves the robustness of centralized algorithms well. 

When outlier fraction $\lambda$ increases to $0.6$, centralized  (blue lines) and division-averaging algorithms (green lines) break down sharply, as the outliers outnumber their maximal breakdown point of $0.5$. In contrast, DRL-RPCA and DRL-RLR still present strong robustness and perform much better, which demonstrate that the DRL framework is indeed robust to computing nodes breaking down, and even enhances the robustness of the base robust learning methods under favorable outlier distributions across the machines.

\textbf{Comparison with Averaging}
Taking the average instead of the geometric median is a natural alternative to DRL. Here we provide more simulations for the  RPCA problem  to compare these two different aggregation strategies in the presence of different errors on the computing nodes.

In distributed computation of learning problems, besides outliers, significant deterioration of the performance may result from unreliabilities, such as  latency of some machines or  communication errors. For instance, it is not uncommon that machines solve their own sub-problem at different speed, and sometimes users may require to stop the learning before all the machines output the final results. In this case, results from the slow machines are possibly not accurate enough and may hurt the quality of the aggregated solution. Similarly, communication errors may also damage the overall performance. We simulate the machine latency by stopping the algorithms once over half of the machines finish their computation. To simulate communication error, we randomly sample $ k/10 $ estimations and flip the sign of $ 30\% $ of the elements in these estimations. The estimation errors of the solution aggregated by averaging and DRL are given in Table \ref{tab:avg-compare}. Clearly, DRL offers stronger resilience to unreliability of the computing nodes.

\begin{table}
	\caption{Comparisons of the estimation error for PCA between Division-Averaging (Div.-Avg.) and DRL, with machine latency and communication errors. Under the same parameter setting as Figure~\ref{fig:drpca}. Outlier fraction $ \lambda = 0.4 $. The average and std of the error from $ 10 $ repetitions are reported.}
	\label{tab:avg-compare}
	\centering
	%	\small
	\begin{tabular}{c|c|c}
		%	\hline
		Unreliability Type  & DRL & Div.-Avg. \\
		%	\hline
		\hline
		Latency & $ 0.26 \pm 0.01 $ & $ 0.42 \pm 0.01 $  \\
		\hline
		Commu. Error &$ 0.31 \pm 0.03 $  & $ 0.78 \pm 0.02  $   \\
		%	\hline
	\end{tabular}
	\vspace{-4mm}
\end{table}

\textbf{Real large-scale data} We further apply the ORL-LR   for an image tag prediction problem on a large-scale image set, \emph{i.e.}, the Flickr-10M image set\footnote{\url{http://webscope.sandbox.yahoo.com/catalog.php?datatype=i&did=67}}, which contains  $ 1 \times 10^8 $ images with \emph{noisy} users contributed tags. We employ robust linear regression to predict $ 200 $ semantic tags for each image, which is described by a  $ 4{,}096$-dimensional deep CNN  feature (output of the fc6 layer) \cite{jia2014caffe}. Performing such a large scale regression task is  impossible for a single PC (with a  $ 16 $GB memory), as only storing the features costs nearly $ 50 $GB memory. Therefore, we solve this problem via the proposed online and distributed learning algorithms. We randomly sample from the entire dataset  a training set of $ 0.5\times 10^8 $ images and a test set of $ 0.1\times 10^8 $ images.

\begin{table}
	\caption{Tag prediction accuracy comparisons among online LR algorithms  on the Flickr-10M dataset and their computation time (in secs.).}
	\label{tab:flickr-acc_orl}
	\centering
	%\small
	\begin{tabular}{c|c|c|c}
		%		\hline
		%		&\multicolumn{2}{c||}{Distributed} &\multicolumn{3}{c}{Online}\\
		%\hline
		Alg. &  ORL-LR & Online Avg. LR & Stoc. LR\\
		%\hline
		\hline
		Accuracy  & $ \mathbf{0.36 \pm 0.02} $ & $ 0.27 \pm 0.01 $ & $ 0.23 \pm 0.01 $  \\
		\hline
		Time (secs.)  &$ 4{,}320 \pm 10 $  & $ 3{,}960 \pm 9 $  & $ 2{,}880 \pm 12 $ \\
		%\hline
	\end{tabular}
\end{table}

%\begin{table}[t]
%	\caption{Tag prediction accuracy comparisons among distributed   on the Flickr-10M dataset, along with computation time (in secs.).}
%	\label{tab:flickr-acc_drl}
%	\centering
%	%\small
%	\begin{tabular}{c|c|c}
%		%\hline
%		%		&\multicolumn{2}{c||}{Distributed} &\multicolumn{3}{c}{Online}\\
%		%		\hline
%		Alg. & DRL-LR & Div.-Avg. LR \\% & ORL-LR & Online Avg. LR & Stoc. LR\\
%		%\hline
%		\hline
%		Accuracy & $ \mathbf{0.41 \pm 0.02} $ & $ 0.32 \pm 0.01 $ \\% & $ \mathbf{0.36 \pm 0.02} $ & $ 0.27 \pm 0.01 $ & $ 0.23 \pm 0.01 $  \\
%		\hline
%		Time (secs.) & $ 202 \pm 1.4 $ & $ 195.7 \pm 0.5 $ \\% &$ 4{,}320 \pm 10 $  & $ 3{,}960 \pm 9 $  & $ 2{,}880 \pm 12 $ \\
%		%\hline
%	\end{tabular}
%\end{table}

We  perform experiments with the online learning setting, and compare the performance of the proposed ORL-LR with the Online Averaging LR. We also implement a non-robust baseline~--~stochastic gradient descent to solve the LR problem. The size of min-batch is fixed as $ 5\times 10^5 $ images. From the results in Table \ref{tab:flickr-acc_orl}, one can observe that ORL-LR achieves significantly higher accuracy than  non-robust baseline algorithms, with a margin of more than $ 9\% $.

%\appendices
\section{Proofs}
\subsection{Technical Lemmas}
\begin{lemma}[Hoeffding's Inequality]
	\label{lemma:Hoeffding}
	Let $ X_1, \ldots, X_n $ be independent random variables taking values in $ [0,1] $. Let $ \bar{X}_n=\frac{1}{n}\sum_{i=1}^n X_i  $ and $ \mu = \mathbb{E}\bar{X}_n $. Then for $ 0<t<1-\mu $,
	\begin{equation*}
	\mathbb{P}\left(\bar{X}_n - \mu \geq t \right) \leq  \exp \left\{-2nt^2\right\}.
		%\leq \left( \left(\frac{\mu}{\mu + t}\right)^{\mu + t}  \left(\frac{1-\mu}{1-\mu - t}\right)^{1 - \mu - t} \right)^n \\
		%&= \exp\left\{-n \left[ (\mu+t) \log \left(\frac{\mu+t}{\mu}\right)  + (1-\mu-t )\log \left( \frac{1-\mu-t}{1-\mu} \right) \right] \right\} \\
		%&\leq  \exp\left\{-nt^2g(\mu)\right\} \leq  \exp \left\{-2nt^2\right\}.
		\end{equation*}
%		Here, 
%		\begin{equation*}
%		g(\mu) = \begin{cases}
%		(1-2\mu)^{-1}\log (1/\mu-1), &\text{if}\quad 0 < \mu < 1/2, \\
%		\left(2\mu(1-\mu)\right)^{-1}, &\text{otherwise}.
%		\end{cases}
%		\end{equation*}
		%Here $ g(\mu) $ is equal to $ (1-2\mu)^{-1}\log (1/\mu-1) $, for $ 0 < \mu < 1/2 $, and $ \left(2\mu(1-\mu)\right)^{-1} $, for $ 1/2 \leq \mu < 1 $.
		\end{lemma}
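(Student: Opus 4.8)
The plan is to prove this by the standard Chernoff-bounding technique. First I would fix a parameter $s>0$ and apply Markov's inequality to the exponentiated deviation, writing
\begin{equation*}
\mathbb{P}\left(\bar{X}_n - \mu \geq t\right) = \mathbb{P}\left(e^{s\sum_{i=1}^n (X_i - \mu)} \geq e^{snt}\right) \leq e^{-snt}\, \mathbb{E}\left[e^{s\sum_{i=1}^n (X_i - \mu)}\right].
\end{equation*}
Since $\mu = \frac{1}{n}\sum_i \mathbb{E}X_i$, re-centering each summand about its own mean $\mu_i = \mathbb{E}X_i$ leaves the sum unchanged, and independence of the $X_i$ then factorizes the expectation into $\prod_{i=1}^n \mathbb{E}[e^{s(X_i - \mu_i)}]$. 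This reduces the task to bounding the moment generating function of a single centered, bounded random variable. (The stated range $0<t<1-\mu$ only guarantees the deviation event is feasible, since $\bar{X}_n\le 1$; it plays no role in the bounding argument.)

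The heart of the argument is Hoeffding's lemma: for a random variable $Y$ with values in $[a,b]$ and $\mathbb{E}Y=0$, one has $\mathbb{E}[e^{sY}] \leq e^{s^2(b-a)^2/8}$. I would establish this by convexity of $y\mapsto e^{sy}$, which on $[a,b]$ lies below its chord, giving $e^{sy}\le \frac{b-y}{b-a}e^{sa}+\frac{y-a}{b-a}e^{sb}$; taking expectations and using $\mathbb{E}Y=0$ expresses the bound as $e^{\phi(s)}$ for a function $\phi$ with $\phi(0)=\phi'(0)=0$. A short computation shows $\phi''(s)$ is the variance of a random variable supported on $[a,b]$, and a second-order Taylor expansion then yields $\phi(s)\le s^2(b-a)^2/8$. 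Applying this with $[a,b]=[0,1]$ to each factor gives $\mathbb{E}[e^{s(X_i-\mu_i)}]\le e^{s^2/8}$, so the product is at most $e^{ns^2/8}$ and hence
\begin{equation*}
\mathbb{P}\left(\bar{X}_n - \mu \geq t\right) \leq e^{-snt + ns^2/8}.
\end{equation*}
Finally I would optimize the free parameter by minimizing $-snt+ns^2/8$ over $s>0$; the minimizer is $s=4t$, which substitutes to produce the exponent $-2nt^2$ and the claimed bound $\exp\{-2nt^2\}$.

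The main obstacle is the uniform bound $\phi''(s)\le (b-a)^2/4$ inside Hoeffding's lemma: this requires recognizing $\phi''(s)$ as the variance of a random variable supported on $[a,b]$ and invoking the fact that any such variance is maximized, at $(b-a)^2/4$, by the two-point distribution on the endpoints. Everything else\textemdash the Chernoff step, the factorization via independence, and the final optimization over $s$\textemdash is routine calculus.
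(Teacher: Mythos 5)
Your proof is correct, and it is the canonical Chernoff--Hoeffding argument: Markov's inequality applied to the exponentiated sum, factorization by independence, Hoeffding's lemma for the moment generating function of a bounded centered variable, and optimization over the free parameter $s$ (indeed $s=4t$ yields the exponent $-2nt^2$). Note that the paper itself states this lemma as a classical technical result and gives no proof at all, so there is no in-paper argument to compare against; your write-up fills that gap with the standard one. One cosmetic slip: after centering, the variable $X_i-\mu_i$ lies in $[-\mu_i,\,1-\mu_i]$ rather than $[0,1]$, but since Hoeffding's lemma depends only on the width $b-a=1$ of the support interval, the bound $\mathbb{E}\bigl[e^{s(X_i-\mu_i)}\bigr]\leq e^{s^2/8}$ and everything downstream of it are unaffected.
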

		
		\begin{lemma}[A coupling result \cite{lerasle2011robust}]
			\label{lemma:coupling}
			Let $ Y_{1:N} $ be independent random variables, let $ x $ be a real number and let $ A = \operatorname{Card}\left\{i=1,\ldots,N \text{ s.t. } Y_i >x\right\} $. Let $ p\in (0,1] $ such that, for all $ i=1,\ldots,N $, $ p \geq \mathbb{P}\{Y_i > x\} $ and let $ B $ be a random variable with a binomial law $ \operatorname{Bin}(N,p) $. There exists a coupling $ \tilde{C}=(\tilde{A},\tilde{B}) $ such that $ \tilde{A} $ has the same distribution as $ A $, $ \tilde{B} $ has the same distribution as $ B $ and such that $ \tilde{A}\leq \tilde{B} $. In particular, for all $ y>0 $,			$\mathbb{P}\left\{A>y\right\} \leq \mathbb{P}\left\{B>y\right\}$.
			\end{lemma}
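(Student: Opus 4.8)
The plan is to establish the stronger almost-sure statement—the existence of the coupling itself—directly by a coordinatewise monotone construction, after which the tail inequality for every $y>0$ drops out for free. First I would reduce the count $A$ to a sum of independent Bernoulli variables. Writing $p_i := \mathbb{P}\{Y_i > x\}$, the indicators $\xi_i := \mathbf{1}\{Y_i > x\}$ are independent (because the $Y_i$ are) and satisfy $\xi_i \sim \operatorname{Bernoulli}(p_i)$, so $A = \sum_{i=1}^N \xi_i$ follows the Poisson--binomial law with parameters $(p_1,\ldots,p_N)$. The hypothesis furnishes a single $p \in (0,1]$ with $p_i \leq p$ for every $i$, which is exactly the coordinatewise stochastic dominance $\operatorname{Bernoulli}(p_i) \preceq \operatorname{Bernoulli}(p)$ that the coupling will realize pathwise.

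The heart of the argument is to put $A$ and $B$ on a common probability space. I would introduce i.i.d.\ uniform variables $U_1,\ldots,U_N$ on $[0,1]$ and set $\tilde{A}_i := \mathbf{1}\{U_i \leq p_i\}$ and $\tilde{B}_i := \mathbf{1}\{U_i \leq p\}$, then define $\tilde{A} := \sum_{i=1}^N \tilde{A}_i$ and $\tilde{B} := \sum_{i=1}^N \tilde{B}_i$. Since $p_i \leq p$, the event $\{U_i \leq p_i\}$ is contained in $\{U_i \leq p\}$, so $\tilde{A}_i \leq \tilde{B}_i$ holds pointwise; summing gives $\tilde{A} \leq \tilde{B}$ almost surely. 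This is precisely the ordering required of the coupling $\tilde{C} = (\tilde{A}, \tilde{B})$.

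It then remains to check the two marginals. Because $\mathbb{P}\{U_i \leq p_i\} = p_i$ and the $U_i$ are independent, the $\tilde{A}_i$ are independent $\operatorname{Bernoulli}(p_i)$ variables; as the law of $A$ depends on the $Y_i$ only through the parameters $p_i$, we conclude $\tilde{A} \stackrel{d}{=} A$. Likewise the $\tilde{B}_i$ are independent $\operatorname{Bernoulli}(p)$, whence $\tilde{B} \sim \operatorname{Bin}(N,p)$, i.e.\ $\tilde{B} \stackrel{d}{=} B$. Finally, for any $y>0$ the inclusion $\{\tilde{A} > y\} \subseteq \{\tilde{B} > y\}$ forced by $\tilde{A} \leq \tilde{B}$ yields $\mathbb{P}\{A > y\} = \mathbb{P}\{\tilde{A} > y\} \leq \mathbb{P}\{\tilde{B} > y\} = \mathbb{P}\{B > y\}$, which is the asserted ``in particular'' conclusion.

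The argument is essentially bookkeeping, so rather than a genuine obstacle the one point demanding care is the distributional identity $\tilde{A} \stackrel{d}{=} A$: one must observe that $A$'s law is a function of the success probabilities $p_i$ alone, so replacing the original indicators $\mathbf{1}\{Y_i > x\}$ by the synthetic $\mathbf{1}\{U_i \leq p_i\}$ leaves the distribution of the sum unchanged even though it alters the underlying randomness. The boundary cases are automatic: $U_i \in (0,1)$ almost surely handles $p_i = 0$, and $p = 1$ forces $\tilde{B} \equiv N$ with $\tilde{A}_i \leq 1$ throughout.
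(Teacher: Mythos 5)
Your proof is correct. The paper itself gives no proof of this lemma\textemdash it is stated as an imported result, cited directly from Lerasle and Oliveira\textemdash and your monotone coupling via common uniforms ($\tilde{A}_i = \mathbf{1}\{U_i \leq p_i\}$, $\tilde{B}_i = \mathbf{1}\{U_i \leq p\}$) is precisely the standard argument underlying the cited result, with the marginals, the pointwise domination, and the reduction of $A$'s law to the parameters $p_i$ all handled correctly.
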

			The following lemma demonstrates that aggregating estimates via their geometric median can enhance the confidence significantly. 
			\begin{lemma}
				\label{lemma:median_aggregation}
				Given $ k $ independent estimates of $ \theta^\star $ satisfying $ \mathbb{P}(\|{\theta}_i - \theta^\star \| \leq  R) \geq p^* >1/2 $, for all $ i=1,\ldots, k $. Let $ \widehat{\theta}= \operatorname{median}({\theta}_1,\ldots, {\theta}_k) $, then we have,
				\begin{equation*}
				\mathbb{P}\left(\|\widehat{\theta} - \theta^\star \| < C_\gamma R \right) \geq 1-\exp\left\{ -2k(\gamma-1+p^*)^2  \right\}.
				\end{equation*}
				Here $\gamma$ and $C_\gamma$ are defined in Lemma \ref{lemma:median}.
				\end{lemma}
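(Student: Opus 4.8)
The plan is to combine the deterministic robustness guarantee of the geometric median (Lemma \ref{lemma:median}) with a concentration bound on the number of ``accurate'' estimates, obtained from Hoeffding's inequality (Lemma \ref{lemma:Hoeffding}), optionally routed through the coupling result (Lemma \ref{lemma:coupling}). First I would introduce, for each $i=1,\ldots,k$, the ``bad'' indicator $Z_i = \mathds{1}\{\|\theta_i - \theta^\star\| > R\}$. By the hypothesis $\mathbb{P}(Z_i = 1) \le 1-p^*$, and since the $\theta_i$ are independent, the $Z_i$ are independent variables taking values in $[0,1]$. Writing $\bar Z_k = \frac1k\sum_{i=1}^k Z_i$ for the empirical fraction of bad estimates, its mean $\mu = \mathbb{E}\bar Z_k$ satisfies $\mu \le 1-p^*$.

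The key reduction is the following. If strictly fewer than $\gamma k$ estimates are bad\textemdash equivalently, if strictly more than $(1-\gamma)k$ of them lie within distance $R$ of $\theta^\star$\textemdash then Lemma \ref{lemma:median}, applied with $T=k$, with $J$ taken to be the set of accurate estimates, and with $r=R$, guarantees $\|\widehat\theta - \theta^\star\| \le C_\gamma R$. Consequently the failure event $\{\|\widehat\theta - \theta^\star\| \ge C_\gamma R\}$ is contained in $\{\bar Z_k \ge \gamma\}$, so it suffices to upper bound $\mathbb{P}(\bar Z_k \ge \gamma)$.

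To control this tail I would apply Hoeffding's inequality to the $Z_i$ with $t = \gamma - \mu$. The statement is meaningful precisely when $\gamma > 1-p^* \ge \mu$, so that $t>0$; this is exactly the regime where $\gamma - 1 + p^* > 0$. Lemma \ref{lemma:Hoeffding} then gives $\mathbb{P}(\bar Z_k - \mu \ge t) \le \exp(-2kt^2)$, and since $t = \gamma - \mu \ge \gamma - 1 + p^* > 0$ while $x \mapsto e^{-2kx^2}$ is decreasing on the positive axis, we obtain $\exp(-2kt^2) \le \exp(-2k(\gamma-1+p^*)^2)$. Combined with the reduction above, this yields $\mathbb{P}(\|\widehat\theta - \theta^\star\| < C_\gamma R) \ge 1 - \exp(-2k(\gamma-1+p^*)^2)$. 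If one prefers to avoid reasoning directly about the non-identically-distributed $Z_i$, I would instead invoke the coupling Lemma \ref{lemma:coupling} to dominate $\sum_i Z_i$ by a single $\mathrm{Bin}(k, 1-p^*)$ variable and apply Hoeffding to that binomial, which produces the same constant.

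The computation is short, so the only delicate points are bookkeeping rather than genuine obstacles. I must ensure the reduction uses the strict count ($>(1-\gamma)k$ accurate estimates) matching the hypothesis of Lemma \ref{lemma:median}, and verify the side condition $0 < t < 1-\mu$ required by Hoeffding, which holds since $t = \gamma - \mu < 1 - \mu$ follows from $\gamma < 1$. The one cosmetic issue is the strict-versus-nonstrict inequality: Lemma \ref{lemma:median} yields $\le C_\gamma R$, so recovering the stated $< C_\gamma R$ is handled by applying that lemma with a radius marginally below $R$ (or by noting the discrepancy does not affect the stated confidence level).
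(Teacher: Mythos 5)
Your proposal is correct and follows essentially the same argument as the paper's proof: reduce, via Lemma~\ref{lemma:median}, the failure event $\{\|\widehat{\theta}-\theta^\star\|\geq C_\gamma R\}$ to the event that at least $\gamma k$ estimates are inaccurate, then bound that count's tail with Hoeffding's inequality (Lemma~\ref{lemma:Hoeffding}). The only minor difference is that your primary route applies Hoeffding directly to the independent (non-identically distributed) bad-estimate indicators, whereas the paper first dominates their sum by a $\operatorname{Bin}(k,1-p^*)$ variable via the coupling result (Lemma~\ref{lemma:coupling}); you note this coupling route as an alternative, and both give the same constant, so the approaches coincide in substance.
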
		
				\begin{proof}
					 According to Lemma \ref{lemma:median}, we have
					\begin{equation*}
					\mathbb{P}\left(\|\widehat{\theta} - \theta^\star \| \geq C_\gamma R \right) \leq \mathbb{P}\left(\sum_{j=1}^k \mathds{1}\left(\|{\theta}_j - \theta^\star\| \geq R \right) \geq \gamma k\right).
					\end{equation*}	
					Let $ Z_j =  \mathds{1}\left\{\|{\theta}_j - \theta^\star \| > R \right \} \sim \operatorname{Ber}(1-p^*)$, and let $ W=\sum_{j=1}^k Z_j$ have Binomial distribution $ W \sim \operatorname{Bin}(k,1-p^*) $, then
					\begin{equation*}
					\mathbb{P}\left(\sum_{j=1}^k \mathds{1}\left\{\|{\theta}_j - \theta^\star \| > R \right \} > \gamma k \right) \leq \mathbb{P}(W > \gamma k),
					\end{equation*}
					according to Lemma \ref{lemma:coupling}. Applying the Hoeffding's inequality in Lemma \ref{lemma:Hoeffding} (with $ \mu = 1-p^* $, $ t = \gamma - 1 + p^* $, $\gamma < 1/2$ and $ p^* > 1/2 $) gives
					\begin{equation*}
							\mathbb{P} \left( \|\widehat{\theta}- \theta^\star\| > C_\gamma R \right) \leq  \mathbb{P}\left(\sum_{j=1}^k \mathds{1}\left\{\|{\theta}_j - \theta^\star \| > R \right \} > \gamma k \right) 
						\quad \leq  \mathbb{P}\left(W  > \gamma k  \right) \leq \exp\left\{-2k (\gamma - 1+p^*)^2  \right\}
						\end{equation*}
						\end{proof}
						
\subsection{Proofs of Main Results on ORL}
\subsubsection{Conditions}
We suppose from now on   following conditions hold.

\begin{condition}
	\label{cond1}
	Assume that $ \theta^\star \in \Theta $ is the parameter of interest. Let $ \theta_1,\ldots,\theta_T \in \Theta$ be a collection ofindependent estimates of $ \theta^\star $, which are not concentrated on a straight line: for all $ v \in \Theta $, there is $ w \in \Theta $ such that $ \langle v,w \rangle =0$ and $\frac{1}{T} \sum_{i=1}^T \|\langle w, \theta_i - \bar{\theta}\rangle \| > 0$ with $ \bar{\theta} = \frac{1}{T}\sum_{i=1}^T \theta_i $. %$ \mathrm{var}(\langle w, \Theta \rangle) > 0$.
\end{condition}
As noted in~\cite{cardot2013efficient},  Condition \ref{cond1} ensures that  geometric median $ \widehat{\theta} $ of the $ T $ estimates is uniquely defined.

\begin{condition}
	\label{cond2}
	The distribution of the independent estimates of $ \theta^\star $ is a mixing of two ``nice'' distributions: $ \mu_{\theta^\star} =  \mu_c + \mu_d $. Here $ \mu_c $ is not strongly concentrated around single points: if $ {B}(0,a) $ is the ball $ \{u \in \Theta, \|u\| \leq a \} $, and $ Y $ is a random variable with distribution $ \mu_c $, then for any constant $ a > 0 $,
	\begin{equation*}
	\exists C_a \in [0,\infty), \forall u \in {B}(0,a), \mathbb{E}_Y \left[\|Y-u\|^{-1}\right] \leq C_a.
	\end{equation*}
	In addition, $ \mu_d $ is a discrete measure, $ \mu_d = \sum_i  \delta_{u_i} $. Here $ \delta_{u_i} $ is a Dirac measure at point $ u_i $. We denote by $ D $ the support of $ \mu_d $ and assume that the median $ \widehat{\theta} \notin D $.
\end{condition}
%Intuitively, Condition \ref{cond2} ensures the geometric median function $ G $ has valid gradient and a Hessian bounded away from being singular, thus $ G $ is a strongly convex function as demonstrated later.
%As noted in~\cite{chaudhuri1992multivariate}, the condition on $ \mu_c $ in Condition \ref{cond2} is satisfied when $ \Theta=\mathbb{R}^d $, with $ d>2 $, whenever $ \mu_c $ has a bounded density on every compact subset of $ \mathbb{R}^d $.

Conditions \ref{cond1} and \ref{cond2} are only technical conditions to avoid pathologies in the convergence analysis for Algorithm \ref{alg:ROL}. In practical implementations, we can simply set the  sub-gradient of $ G(u) $ at $ u^\prime $ as zero (a valid sub-gradient as proved in \cite{cardot2013efficient}) when  $ u^\prime \in D $.

\subsubsection{Convergence Rate of Geometric Median Filtering}

Given the definition of geometric median in~\eqref{eqn:median}, we can define following population geometric median loss function, $ G: \Theta \rightarrow \mathbb{R} $, that we want to minimize to compute the geometric median:
\begin{equation}
\label{eqn:func_G}
G(u) \triangleq \mathbb{E}\left[\|\Theta-u\| - \|\Theta\| \right].
\end{equation}
%This function is convex since it is a convex combination of convex functions.

In this subsection, we first show that the geometric median function in \eqref{eqn:func_G} is indeed strongly convex under Conditions \ref{cond1} and \ref{cond2}. Thus the SGD optimization is able to provide solutions with a convergence rate of $ O(\log\log(T)/T) $ to the true geometric median $ \widehat{\theta} $, given $ T $ independent estimates.

\begin{definition}[$ \beta $-strongly convex function \cite{nedic2003convex}]
	\label{def:strong-convex}
	A function $ G $ is $ \beta $-strongly convex, if for all $ u_1,u_2 \in \Theta $ and any sub-gradient $ g(u) $ of $ G $ at $ u $, we have
	\begin{equation*}
	\langle g(u_2)-g(u_1), u_2 - u_1 \rangle \geq \beta \|u_2-u_1\|^2.
	\end{equation*}
\end{definition}
The following theorem establishes the strong convexity of the geometric median function in \eqref{eqn:func_G}.
\begin{theorem}
	\label{theo:convexity}
	Let $ g(u) $ be the sub-gradient of $ G(u) $ at $ u $.
	Under Conditions \ref{cond1} and \ref{cond2}, there is a strictly positive constant $ c_a >0 $, such that:
	\begin{equation*}
	\forall u_1,u_2 \in {B}(0,a), \langle g(u_2)-g(u_1) , u_2-u_1\rangle  \geq c_a \|u_2-u_1\|^2,
	%\forall h \in \mathcal{H}, c_a\|h\|^2 \leq \langle h, \Gamma_u h \rangle \leq C_a \|h\|^2,
	\end{equation*}
	and thus $ G(u) $ is $ c_a $-strongly convex.
\end{theorem}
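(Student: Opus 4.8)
The plan is to reduce the claimed one-sided inequality to a uniform lower bound on the smallest eigenvalue of the Hessian of $G$ over the ball $B(0,a)$, and then to extract that lower bound from the non-degeneracy built into Conditions \ref{cond1} and \ref{cond2}. First I would write down the derivatives of $G$. For a fixed $x$, the map $u\mapsto\|u-x\|$ is smooth away from $u=x$, with gradient $(u-x)/\|u-x\|$ and Hessian $\frac{1}{\|u-x\|}\bigl(I-\frac{(u-x)(u-x)^\top}{\|u-x\|^2}\bigr)$. Differentiating \eqref{eqn:func_G} under the expectation is legitimate on $B(0,a)$ because the integrability bound $\mathbb{E}_Y[\|Y-u\|^{-1}]\le C_a$ of Condition \ref{cond2} dominates the integrand, while the discrete part $\mu_d$ is non-smooth only at its finitely many atoms, which a generic segment meets in a null set and which may simply be dropped for a lower bound. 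This yields
\[
g(u)=\mathbb{E}\!\left[\frac{u-\Theta}{\|u-\Theta\|}\right],\qquad
\nabla^2 G(u)=\mathbb{E}\!\left[\frac{1}{\|u-\Theta\|}\Bigl(I-\frac{(u-\Theta)(u-\Theta)^\top}{\|u-\Theta\|^2}\Bigr)\right],
\]
and, integrating along the segment $t\mapsto u_1+t(u_2-u_1)$,
\[
\langle g(u_2)-g(u_1),u_2-u_1\rangle=\int_0^1 (u_2-u_1)^\top \nabla^2 G\bigl(u_1+t(u_2-u_1)\bigr)(u_2-u_1)\,dt,
\]
so it suffices to prove $w^\top \nabla^2 G(u)\,w\ge c_a$ for every unit $w$ and every $u\in B(0,a)$.

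Evaluating the quadratic form, let $\alpha_w(\Theta)$ denote the angle between $w$ and $u-\Theta$; then
\[
w^\top \nabla^2 G(u)\,w=\mathbb{E}\!\left[\frac{1}{\|u-\Theta\|}\Bigl(1-\frac{\langle w,u-\Theta\rangle^2}{\|u-\Theta\|^2}\Bigr)\right]=\mathbb{E}\!\left[\frac{\sin^2\alpha_w(\Theta)}{\|u-\Theta\|}\right]\ge 0,
\]
which already reproves convexity. The difficulty is that the weight $1/\|u-\Theta\|$ degrades as $\Theta$ moves far from $B(0,a)$, so a crude bound gives nothing. To control this I would truncate: fix a radius $R$ large enough that $\Theta$ restricted to $\{\|\Theta\|\le R\}$ still carries positive mass and is not supported on a single line. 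On this event $\|u-\Theta\|\le a+R$ for all $u\in B(0,a)$, hence
\[
w^\top \nabla^2 G(u)\,w\ge \frac{1}{a+R}\,\mathbb{E}\!\left[\sin^2\alpha_w(\Theta)\,\mathds{1}\{\|\Theta\|\le R\}\right],
\]
and it remains only to bound the truncated quantity below uniformly in $(u,w)$.

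The closing step is a compactness argument. The map $(u,w)\mapsto \mathbb{E}[\sin^2\alpha_w(\Theta)\,\mathds{1}\{\|\Theta\|\le R\}]$ is continuous on the compact set $\overline{B(0,a)}\times S^{p-1}$, so it attains its infimum $m_R$. If $m_R=0$, a minimizing pair $(u^\star,w^\star)$ would force $\sin\alpha_{w^\star}(\Theta)=0$ almost surely on $\{\|\Theta\|\le R\}$, i.e.\ $\Theta-u^\star$ parallel to $w^\star$, so the truncated distribution would sit on the line $u^\star+\mathbb{R}w^\star$, contradicting the non-collinearity of Condition \ref{cond1} together with the choice of $R$. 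Hence $m_R>0$, and taking $c_a=m_R/(a+R)$ gives $w^\top\nabla^2 G(u)\,w\ge c_a$ throughout $B(0,a)$; substituting $w=(u_2-u_1)/\|u_2-u_1\|$ into the integral identity then produces $\langle g(u_2)-g(u_1),u_2-u_1\rangle\ge c_a\|u_2-u_1\|^2$, which is exactly Definition \ref{def:strong-convex}. I expect the truncation-and-compactness step to be the main obstacle: one must simultaneously defeat the vanishing $1/\|u-\Theta\|$ weight at infinity and rule out the degenerate line-concentration, all while verifying that the atoms of $\mu_d$—handled by the convention $\widehat{\theta}\notin D$ in Condition \ref{cond2}—corrupt neither the Hessian representation nor the continuity of the truncated map.
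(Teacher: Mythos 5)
The paper never writes out a proof of Theorem~\ref{theo:convexity}: it defers entirely to Proposition~2.1 of \cite{cardot2013efficient}, whose argument is, like yours, second-order (uniform positive-definiteness of the Hessian of the median loss on bounded sets). So your skeleton\textemdash Hessian representation, kill the $1/\|u-\Theta\|$ decay by truncation, close by a non-degeneracy argument\textemdash is the natural finite-dimensional rendering of what the paper cites. The genuine gap is in your handling of the discrete component $\mu_d$, precisely the point you flagged and deferred. Your argument is caught between two readings, and neither works as written. If you drop the atoms ``for a lower bound,'' the truncated measure is $\mu_c$ alone, and your contradiction step then needs $\mu_c$ restricted to $\{\|\Theta\|\le R\}$ to be non-collinear; Condition~\ref{cond1} does not give this (it constrains the whole collection of estimates, not $\mu_c$), and Condition~\ref{cond2} permits $\mu_c=0$ outright\textemdash which is in fact the relevant case here, since the median actually computed by Algorithm~\ref{alg:ROL} is that of $T$ point estimates, a purely atomic measure, so after dropping atoms nothing is left. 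If instead you keep the atoms in the truncated expectation, then $(u,w)\mapsto\mathbb{E}[\sin^2\alpha_w(\Theta)\mathds{1}\{\|\Theta\|\le R\}]$ is \emph{not} continuous on $\overline{B(0,a)}\times S^{p-1}$: at an atom $u_i$ inside the ball, $\sin^2\alpha_w(u_i)$ has direction-dependent limits as $u\to u_i$, so the compactness step collapses. The convention $\widehat{\theta}\notin D$ you invoke cannot repair this, because it concerns only the median point, whereas your infimum ranges over every $u$ in the ball.

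Both halves are fixable, but each needs an ingredient you did not supply. For the continuous part, the key observation is that Condition~\ref{cond2} (assumed for \emph{every} radius $a$) already forces $\mu_c$ to assign zero mass to every line: if $\mu_c$ charged a segment $S$, then by Tonelli $\int_S\big(\int_S\|y-u\|^{-1}d\mu_c(y)\big)\,du=\infty$, so $\mathbb{E}_Y\|Y-u\|^{-1}$ would be unbounded over $u\in S$, violating the condition. Hence whenever $\mu_c\neq0$, your compactness argument closes using $\mu_c$ alone and Condition~\ref{cond1} is not even needed; the same fact, rather than your domination claim (note $\sup_{u\in B(0,a)}\|Y-u\|^{-1}$ is \emph{not} integrable, so the dominated-convergence justification fails as stated), is what legitimizes the Hessian identity, via Tonelli applied to the nonnegative Hessian integrand along a segment carrying no $\mu_c$-mass. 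For the discrete part, use that each $u\mapsto\|u-u_i\|$ is convex, so its subgradients contribute nonnegatively to $\langle g(u_2)-g(u_1),u_2-u_1\rangle$ and can never hurt the bound; and in the purely discrete case run your argument with lower semicontinuity in place of continuity (define the $i$-th summand to be $0$ when $u=u_i$): the infimum over the compact set is still attained, a zero infimum still forces all truncated atoms onto the single line $u^\star+\mathbb{R}w^\star$ (an atom at $u^\star$ lies on it trivially), and \emph{that} is what Condition~\ref{cond1} genuinely rules out once $R$ is large enough to capture three non-collinear atoms.
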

The  proof can be derived  from the proof for the Proposition 2.1 in~\cite{cardot2013efficient} straightforwardly and we omit details here.

Given the strong convexity property of geometric median function $ G(u) $, we  can apply the convergence argument of  SGD  for strongly convex functions (\emph{e.g.}, Proposition 1 in \cite{rakhlin2011making}), and obtain the following convergence rate for online geometric median filtering.
\begin{theorem}
	\label{theo:converge_rate}
	Assume Conditions \ref{cond1} and \ref{cond2} hold, and $ \|\theta_i\| \leq K, \forall i=1,\ldots,T $.  Then  $ \|\widehat{g}_t\|^2 \leq K C_a $ with probability $1$.   Assume $ T \geq 4 $. Let $ \delta \in (0,1/e) $. Pick $ \eta_t= {1}/{t c_a } $ in Algorithm \ref{alg:ROL} and let $\widehat{\theta}_t$ denote the output at time step $t$. Furthermore,  let $ \widehat{\theta} $ be the geometric median of $ \{\theta_i\}_{i=1}^T $. Then for any $ t \leq T $,
	\begin{equation*}
	\|\widehat{\theta}_t - \widehat{\theta}\| \leq C_a^\prime \frac{( \log(\log(t)/\delta)+1)}{ t},
	\end{equation*}
	with probability at least $ 1-\delta $. Here $ C_a^\prime = KC_a/c_a^2 $.
\end{theorem}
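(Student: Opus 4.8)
The plan is to recognize the median-filtering update \eqref{eqn:online-median}, run with step size $\eta_t = 1/(c_a t)$, as plain stochastic gradient descent on the geometric-median objective $G$ of \eqref{eqn:func_G}, and then to invoke the high-probability last-iterate convergence theory for strongly convex stochastic optimization (Proposition~1 of \cite{rakhlin2011making}). Before that result applies I would check its three hypotheses: (i) $G$ is strongly convex, which is exactly Theorem~\ref{theo:convexity} with constant $c_a$ on the ball $B(0,a)$; (ii) the increment $\widehat g(\widehat\theta_t;\theta_{t+1})$ is a conditionally unbiased estimate of a (sub)gradient of $G$ at $\widehat\theta_t$; and (iii) the stochastic subgradient obeys the almost-sure second-moment bound $\|\widehat g_t\|^2 \le K C_a$ asserted in the statement. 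Conditions~\ref{cond1} and \ref{cond2} enter here as well, since they guarantee that the target minimizer $\widehat\theta$ of $G$ is uniquely defined.

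For (iii) I would first dispatch the gradient bound. Since $\widehat g(\theta;\theta_{t+1}) = 2(\theta-\theta_{t+1})/\|\theta-\theta_{t+1}\|$ is a fixed multiple of a unit vector away from its singular point, the relevant constant is controlled by the diameter bound $K$ on the iterates and samples together with the integrability bound $\mathbb{E}_Y[\|Y-u\|^{-1}]\le C_a$ of Condition~\ref{cond2}, giving the asserted $\|\widehat g_t\|^2 \le K C_a$ almost surely. For (ii) I would fix the filtration $\mathcal F_t = \sigma(\theta_1,\ldots,\theta_t)$, under which $\widehat\theta_t$ is $\mathcal F_t$-measurable while $\theta_{t+1}$ is a fresh i.i.d.\ draw; the conditional expectation of $\widehat g(\widehat\theta_t;\theta_{t+1})$ over $\theta_{t+1}$ then reproduces $\nabla G(\widehat\theta_t)$ up to a harmless constant absorbed into $c_a$, so the update is genuine SGD on $G$ with martingale-difference noise $\xi_t = \widehat g_t - \nabla G(\widehat\theta_t)$.

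With the hypotheses in place, the plan is to apply Proposition~1 of \cite{rakhlin2011making} with strong-convexity parameter $c_a$ and gradient-moment bound $K C_a$, which yields, with probability at least $1-\delta$, a last-iterate optimality-gap bound of order $(K C_a)(\log(\log(t)/\delta)+1)/(c_a t)$. Converting this into a bound on the iterate distance through the strong-convexity inequality $\tfrac{c_a}{2}\|\widehat\theta_t-\widehat\theta\|^2 \le G(\widehat\theta_t)-G(\widehat\theta)$ produces the claimed estimate, with the constant collecting as $C_a' = K C_a/c_a^2$. The restrictions $T\ge 4$ and $\delta\in(0,1/e)$ enter exactly to keep the nested logarithm $\log(\log(t)/\delta)$ well defined and positive.

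I expect the main obstacle to be the high-probability step rather than the recursion. The in-expectation contraction for strongly convex SGD with $\eta_t = 1/(c_a t)$ is routine once (i)--(iii) hold, but upgrading it to a statement that holds with probability $1-\delta$ while paying only a $\log\log t$ (rather than $\log t$) overhead is delicate: it rests on a Freedman/Bernstein-type concentration inequality for the martingale $\sum_s \langle \xi_s,\, \widehat\theta_s-\widehat\theta\rangle$, combined with a peeling (dyadic-block / stopping-time) argument over the iteration index that is the source of the inner $\log\log$ factor. Rather than reproduce this argument, I would cite it from \cite{rakhlin2011making} and confine my own work to verifying that the geometric-median objective meets its premises.
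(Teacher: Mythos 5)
Your proposal matches the paper's own proof: the paper likewise establishes strong convexity of $G$ via Theorem~\ref{theo:convexity}, notes that the gradient bound $\|\widehat{g}_t\|^2 \leq K C_a$ follows from the definition in~\eqref{eqn:sgdmed}, Condition~\ref{cond2}, and the boundedness of the estimates, and then simply invokes Proposition~1 of \cite{rakhlin2011making} for the high-probability $O(\log\log(t)/t)$ last-iterate rate, deferring the martingale/peeling machinery to that reference exactly as you do. Your additional care about the filtration and conditional unbiasedness of the stochastic gradient is a sound elaboration of what the paper leaves implicit, so the two arguments are essentially identical.
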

The bound on the gradient $ \|\widehat{g}_t\|^2 \leq K C_a  $ is from the definition of the gradient in~\eqref{eqn:sgdmed}, Condition \ref{cond2} and the assumption that all the estimates are bounded.

\subsubsection{Proofs of Proposition \ref{prop:rol}}
From now on, we slightly abuse the notation and use $\widetilde{\theta}$ to denote the geometric median of a collection of estimates.
\begin{proof}
	Proposition \ref{prop:rol} can be derived by following triangle inequality:
	\begin{equation*}
	\|\widehat{\theta}_t - \theta^\star\| \leq \|\widehat{\theta}_t - \widetilde{\theta}\| + \|\widetilde{\theta} - \theta^\star\|,
	\end{equation*}
	where $\widetilde{\theta}$ denotes the ``true'' geometric median of estimates $\{\theta_i\}_{i=1}^t$.
	We now proceed to bound the above two terms separately. Based on Theorem \ref{theo:converge_rate}, we have
	\begin{equation*}
	\|\widehat{\theta}_t - \widetilde{\theta}\| \leq C_a^\prime \frac{( \log(\log(t)/\delta)+1)}{ t},
	\end{equation*}
	with a probability at least $ 1-\delta $. The second term can be bounded as follows by applying Lemma \ref{lemma:median_aggregation}:
	\begin{equation*}
	\mathbb{P}\left(\Norm{\widetilde{\theta}  - \theta^\star } \lesssim_{\delta, L}  \sqrt{\frac{1}{b}} + \lambda(\gamma) \sqrt{p} \right) \geq 1-\delta,
	\end{equation*}
	where $\lambda(\gamma) = \frac{\lambda_{(1-\gamma)}}{1-\lambda_{(1-\gamma)}}$ and $\lambda_{(1-\gamma)}$ denotes the $ \lfloor (1-\gamma) k \rfloor $ smallest outlier fraction in $ \{\lambda_1,\ldots,\lambda_k\} $ with $ \gamma \in [0,1/2) $. Combining these two bounds together gives:
	\begin{equation*}
	\|\widehat{\theta}_t - \theta^\star\| \lesssim_{\delta,L} C_a^\prime \frac{ \log(\log(t)/\delta)+1}{t} + C_\gamma  \sqrt{\frac{1}{b}} + C^\prime \lambda(\gamma)\sqrt{p}.
	\end{equation*}
	
\end{proof}

\subsection{Proofs of Application Examples}
Before proving the performance guarantee for ORL-PCA and ORL-LR, we provide robustness analysis for the base robust learning procedure\textemdash the RC-PCA and RoTR.
\subsubsection{Robustness Guarantee of RC-PCA}
\begin{theorem}
	\label{theo:rpca}
	Suppose in total $ N $ samples are provided with $n$  authentic samples and $n_1$ outliers. Let  $\lambda = n_1/N$.
	Assume the authentic samples follow sub-Guassian design with parameter $ L $.
Let $\Delta_d = \sigma_d - \sigma_{d+1}$, where $\sigma_d$ denotes the $d^{th}$ largest eigenvalue of ground-truth sample covariance matrix $C^\star$.
	Let $P_{\mathcal{U}}$ be the output $d$-dimensional subspace projector from RC-PCA. Then for a constant $c$,  we have with probability $1-\delta$,
	\begin{equation*}
	 \| P_{\mathcal{U}} - P_{\mathcal{U}}^\star\|_\infty \leq     \frac{2L}{\Delta_d} \left\{ \sqrt{\frac{4}{c} \log(\frac{4}{\delta})} \sqrt{\frac{p}{n}}  
	 +  \frac{\lambda}{1-\lambda } \log (\frac{2}{\delta}) \right\}.
	\end{equation*}
%	with $ c_1 = C_\gamma \frac{2}{\Delta_d} \sigma_x^2 \sqrt{\frac{8}{c}\log\left(\frac{4}{1-q^*}\right)}\frac{1}{q^*-1/2} $, $ c_2 = C_\gamma \frac{2}{\Delta_d} \sigma_x^2 \log\left(\frac{2}{1-q^*}\right) $ and $ C_\gamma = (1-\gamma)\sqrt{\frac{1}{1-2\gamma}} $. Here $ \lambda_{(1-\gamma)} $ denotes the $ \lfloor (1-\gamma)k \rfloor $ smallest outlier fraction among $ \{\lambda_1,\ldots,\lambda_k\} $. 
\end{theorem}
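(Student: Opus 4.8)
The plan is to split the argument into a deterministic matrix-perturbation step followed by a probabilistic bound on the deviation of the trimmed covariance from the true covariance, with the latter decomposed into an inlier-fluctuation part and an outlier trimming-bias part. The two parts will supply the $\sqrt{p/n}$ and the $\frac{\lambda}{1-\lambda}$ terms respectively, while the perturbation step produces the $\Delta_d$ in the denominator.

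First I would reduce the subspace error to a covariance error. Since the symmetrized estimate $\widehat{\Sigma}' = (\widehat{\Sigma}+\widehat{\Sigma}^\top)/2$ and the ground-truth covariance $C^\star$ are both symmetric, and $C^\star$ has a spectral gap $\Delta_d = \sigma_d - \sigma_{d+1}$ separating its top-$d$ eigenspace from the complement, a Davis--Kahan $\sin\Theta$ perturbation bound gives $\|P_{\mathcal{U}} - P_{\mathcal{U}}^\star\|_\infty \le \frac{2}{\Delta_d}\|\widehat{\Sigma}' - C^\star\|_\infty$, where $\|\cdot\|_\infty$ denotes the spectral norm. This isolates the eigengap in the denominator and reduces the theorem to bounding $\|\widehat{\Sigma}' - C^\star\|_\infty$. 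Symmetrization is harmless: because $C^\star$ is symmetric, $\|\widehat{\Sigma}'-C^\star\|_\infty \le \|\widehat{\Sigma}-C^\star\|_\infty$, so it suffices to control the raw trimmed estimate.

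Next I would bound $\|\widehat{\Sigma}-C^\star\|_\infty$ entry by entry, since each entry $\widehat{\Sigma}_{ij}=\langle X^i, X^j\rangle_{\lambda N}$ is a trimmed inner product of two coordinate sequences. For a fixed $(i,j)$ I would write the trimmed sum as the inlier inner product, minus the contribution of the inlier products discarded by trimming, plus the contribution of the outlier products that survive. The inlier inner product concentrates around $C^\star_{ij}$: the summands $X^i_k X^j_k$ are products of sub-Gaussian variables, hence sub-exponential, so a Bernstein-type bound (with the constant $c$ the sub-exponential concentration constant and using the sub-Gaussian assumption \eqref{eqn:subgaussian} with parameter $L$) controls the fluctuation at the scale $\sqrt{(1/c)\log(1/\delta)}\cdot\sqrt{1/n}$ per entry. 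Lifting the per-entry fluctuations to a spectral-norm bound by a matrix-concentration argument over the $p\times p$ array yields the statistical term matching $\frac{2L}{\Delta_d}\sqrt{\frac{4}{c}\log(\frac{4}{\delta})}\sqrt{p/n}$.

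The heart of the proof, and the main obstacle, is controlling the trimming bias, namely the discarded-inlier and surviving-outlier contributions. The key combinatorial fact is that the algorithm removes exactly $n_1=\lambda N$ of the $N$ products, so at most $n_1$ inlier products can be discarded and at most $n_1$ outlier products can survive; hence the fraction of inliers affected is at most $n_1/n = \lambda/(1-\lambda)$. The delicate point is that a surviving outlier product, however adversarially large its value, must by construction have magnitude no larger than the largest \emph{retained} product, which is essentially an inlier quantile; therefore both the discarded-inlier and surviving-outlier contributions are governed by the upper quantiles of the sub-exponential inlier products, of order $\log(1/\delta)$. This is precisely where I would invoke the coupling Lemma \ref{lemma:coupling} together with Hoeffding's inequality (Lemma \ref{lemma:Hoeffding}) to show that, with probability $1-\delta$, no more than the expected number of inlier products exceed a chosen threshold, so that no placement of the $n_1$ outliers can inflate the bias beyond $\frac{\lambda}{1-\lambda}L\log(\frac{2}{\delta})$. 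Combining the fluctuation and bias terms bounds $\|\widehat{\Sigma}-C^\star\|_\infty$, and plugging into the Davis--Kahan step gives the stated estimate. The argument parallels the trimmed-estimator analysis of \cite{chen2013robust}; the only genuinely new bookkeeping is propagating the entrywise trimmed-inner-product control to the operator norm of the covariance matrix while keeping the outlier term free of a $\sqrt{p}$ factor.
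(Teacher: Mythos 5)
Your proposal follows essentially the same route as the paper's proof: a Davis--Kahan perturbation step that converts covariance error into projector error with the eigengap $\Delta_d$ in the denominator, combined with a deviation bound on the trimmed covariance estimator that splits into an inlier-fluctuation term of order $\sqrt{p/n}$ and a trimming-bias term of order $\frac{\lambda}{1-\lambda}\log(1/\delta)$. The only difference is that the paper obtains the covariance deviation bound by citing \cite{chen2013robust} and \cite{vershynin2012close}, whereas you reconstruct that argument directly (Bernstein concentration for sub-exponential inlier products plus the quantile/counting argument for the trimmed outliers), which matches the content of the cited analysis.
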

\begin{proof}
	According to the proof of Theorem 4 in \cite{chen2013robust} and deviation bound on the empirical covariance matrix estimation in \cite{vershynin2012close}, when the authentic samples are from sub-Gaussian distribution with parameter $ L $,  we have, for the covariance matrix constructed in Algorithm~\ref{alg:rpca},
	\begin{equation*}
	\|\widehat{C}-C^\star\|_\infty \leq L \sqrt{\frac{4}{c} \log(\frac{4}{\delta})}\sqrt{\frac{p}{n} } +  \frac{n_1}{n}L \log (\frac{2}{\delta}) 
	\end{equation*}
	with a probability at least $ 1-\delta $. Here $ c $ is a constant, $n$ is the number of authentic samples and $n_1$ is the number of outliers.
	
	Let $\Delta_d = \sigma_d - \sigma_{d+1}$ be the eigenvalue gap, where $\sigma_d$ denotes the $d$-th largest eigenvalue of $C^\star$.
	Then, applying the Davis-Kahan perturbation theorem \cite{davis1970rotation}, we have, whenever $\|\widehat{C}-C^\star\|_\infty \leq \frac{1}{4}\Delta_d$, $\|P_\mathcal{U}-P_\mathcal{U}^\star\|_\infty \leq {2\|\widehat{C}-C^\star\|_\infty}/{\Delta_d}$. Thus,
	\begin{equation*}
	\label{eqn:bounded_proj}
	 \|P_\mathcal{U}-P_\mathcal{U}^\star\|_\infty  \leq \frac{2L}{\Delta_d} \left\{ \sqrt{\frac{4}{c} \log(\frac{4}{\delta})}\sqrt{\frac{p}{n} } +  \frac{n_1}{n} \log (\frac{2}{\delta}) \right\},
	\end{equation*}
	with a probability at least $ 1-\delta $.
\end{proof}

\subsubsection{Proof of Theorem \ref{theo:rol_pca}}
\begin{proof}
	Theorem \ref{theo:rol_pca} can be derived directly from following triangle inequality:
	\begin{equation*}
	\|\widehat{P}_\mathcal{U}^{(T)} - {P}_{\mathcal{U}}^{\star} \|_F \leq \|\widehat{P}_\mathcal{U}^{(T)} - \widetilde{P}_\mathcal{U} \|_F + \|\widetilde{P}_\mathcal{U} - {P}_{\mathcal{U}}^{\star}\|_F,
	\end{equation*} 
	and we  bound the above two terms separately. The first term can be bounded by Theorem \ref{theo:converge_rate} as,
	\begin{equation*}
	\|\widehat{P}_\mathcal{U}^{(T)} - \widetilde{P}_\mathcal{U} \|_F \leq C_a^\prime \frac{( \log(\log(T)/\delta)+1)}{ T},
	\end{equation*}
	with a probability $ 1-\delta $. The second term can be bounded as in Theorem \ref{theo:rpca} that with a probability $1-\delta$,
	\begin{equation*}
	 \|\widetilde{P}_\mathcal{U} - {P}_{\mathcal{U}}^{\star}\|_F \leq c_1p\sqrt{\frac{d\log(1/\delta) }{N}} + c_2 \lambda(\gamma)\sqrt{dp}.
	\end{equation*}	
	Combining the above two bounds (with union bound) proves the theorem.
\end{proof}

\subsubsection{Proof of Theorem \ref{theo:rol_sr}}
Before proving Theorem~\ref{theo:rol_sr}, we first show the following performance guarantee for RoTR algorithm from \cite{chen2013robust}.
The estimation error of the RoTR is bounded as in Lemma~\ref{lemma:RoTR}.
\begin{lemma}[Performance of RoTR~\cite{chen2013robust}]
	\label{lemma:RoTR}
	Suppose the samples $ \mathbf{x} $ are from sub-Gaussian design with $\Sigma_x = I_p$, with dimension $ p $ and noise level $ \sigma_e $, then with probability at least $1-\delta$, the output of RoTR satisfies the $\ell_2$ bound:
	%\begin{equation*}
	%\left\|\widehat{\theta}-\theta^*\right\|_2 \leq c\|\theta^*\|_2\sqrt{1+\frac{\sigma_e^2}{\|\theta^*\|_2^2}}\left(\sqrt{\frac{p\log p}{n}}+\frac{\lambda}{1-\lambda}\sqrt{p}\log p\right).
	%\end{equation*}
	\begin{align*}
	\left\|\widehat{\theta}-\theta^\star \right\|_2 \leq c\|\theta^\star\|_2\sqrt{1+\frac{\sigma_e^2}{\|\theta^\star\|_2^2}}& \left(\sqrt{\frac{p\log (1/\delta)}{n}} \right. \\
	& \left. + \frac{\lambda}{1-\lambda}\sqrt{p}\log (1/\delta)\right).
	\end{align*}
	Here $c$ is a constant independent of $p,n,\lambda$.
\end{lemma}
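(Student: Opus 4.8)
The plan is to prove the bound coordinate by coordinate and then assemble the $p$ estimates into an $\ell_2$ bound. Recall that RoTR sets $\widehat{\theta}(j)$ to the trimmed inner product $\langle \mathbf{y}, X_j\rangle_{\lambda N}$ with $N = n+n_1$, i.e.\ the sum of the $n = N-n_1$ smallest-magnitude products among $\{z_{i,j}\}_{i=1}^N$ where $z_{i,j} \triangleq y_i x_{i,j}$; after the implicit normalization by $n$ this is a trimmed mean. For an authentic sample, $y_i = \langle\theta^\star,\mathbf{x}_i\rangle + \varepsilon_i$ together with $\Sigma_x = I_p$ gives $\mathbb{E}[z_{i,j}] = (\Sigma_x\theta^\star)(j) = \theta^\star(j)$, so the trimmed mean targets $\theta^\star(j)$. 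The controlling scale is that $y_i$ is sub-Gaussian with parameter $\sqrt{L^2\|\theta^\star\|_2^2 + \sigma_e^2}$, hence each product $z_{i,j}$ is sub-exponential with $\psi_1$-norm $\lesssim L\sqrt{\|\theta^\star\|_2^2+\sigma_e^2}$, the \emph{same} for every coordinate $j$. This common scale is exactly $\|\theta^\star\|_2\sqrt{1+\sigma_e^2/\|\theta^\star\|_2^2}$ up to the constant $c$ (which absorbs $L$), and it is what multiplies both terms in the statement.

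The central device is a decomposition that isolates a clean, independent average. Writing $\mathcal{I},\mathcal{O}$ for the inlier and outlier indices and $\Omega_j$ for the retained set, and adding and subtracting the full inlier sum, I would use
\begin{equation*}
\widehat{\theta}(j) - \theta^\star(j) = \Big(\tfrac{1}{n}\sum_{i\in\mathcal{I}} z_{i,j} - \theta^\star(j)\Big) + \tfrac{1}{n}\sum_{i\in\Omega_j\cap\mathcal{O}} z_{i,j} - \tfrac{1}{n}\sum_{i\in\mathcal{I}\setminus\Omega_j} z_{i,j}.
\end{equation*}
The first bracket is an average of \emph{independent} inlier sub-exponentials that does not depend on the data-dependent set $\Omega_j$, so standard concentration applies to it directly. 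The remaining two sums form the corruption/trimming correction: at most $n_1$ outliers survive trimming and at most $n_1$ inliers are discarded, since $|\Omega_j\cap\mathcal{O}|\le n_1$ forces $|\mathcal{I}\setminus\Omega_j|\le n_1$.

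For the first bracket I would apply a sub-exponential vector Bernstein inequality to $\tfrac1n\sum_{i\in\mathcal{I}}(y_i\mathbf{x}_i - \theta^\star)$, yielding $\lesssim \sqrt{\|\theta^\star\|_2^2+\sigma_e^2}\,\sqrt{p\log(1/\delta)/n}$ in $\ell_2$ — the first term of the bound, with the $\sqrt p$ coming from the ambient dimension. For the correction, the crucial fact is a threshold bound: the retained magnitude cutoff $\tau_j$ satisfies $\tau_j \le M_j \triangleq \max_{i\in\mathcal{I}}|z_{i,j}|$, because all $n$ inliers have magnitude $\le M_j$ and hence the $n$-th smallest magnitude cannot exceed $M_j$. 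Thus every surviving outlier obeys $|z_{i,j}|\le\tau_j\le M_j$, while every discarded inlier is likewise $\le M_j$, so both correction sums are bounded by $\tfrac{n_1}{n}M_j = \tfrac{\lambda}{1-\lambda}M_j$ per coordinate. A sub-exponential maximal inequality gives $M_j \lesssim \sqrt{\|\theta^\star\|_2^2+\sigma_e^2}\,\log(1/\delta)$, and aggregating $\tfrac{\lambda}{1-\lambda}M_j$ across the $p$ coordinates in $\ell_2$ produces the second term $\tfrac{\lambda}{1-\lambda}\sqrt{p}\log(1/\delta)$.

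The main obstacle is the corruption term rather than the statistical one: one must show that the adversary cannot inflate the trimming threshold beyond the inlier scale, which is precisely the content of the combinatorial bound $\tau_j\le M_j$ combined with uniform control of the inlier maxima $M_j$ over all $p$ coordinates — this is where the factor $\log(1/\delta)$ (as opposed to $\sqrt{\log(1/\delta)}$) originates and where sub-exponential, rather than sub-Gaussian, tail behavior must be handled carefully. Assembling the per-coordinate statistical and corruption contributions via the triangle inequality in $\ell_2$, and taking a single union bound (absorbing any $\log p$ into the constant or avoiding it through the vector Bernstein step), yields the stated bound. As the lemma is quoted from \cite{chen2013robust}, the detailed order-statistic and sub-exponential estimates may be imported from there; the sketch above makes explicit how the scale $\sqrt{\|\theta^\star\|_2^2+\sigma_e^2}$, the statistical rate $\sqrt{p\log(1/\delta)/n}$, and the corruption term $\tfrac{\lambda}{1-\lambda}\sqrt{p}\log(1/\delta)$ each arise.
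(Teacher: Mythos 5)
The paper itself offers no proof of this lemma: it is imported from \cite{chen2013robust}, and the proof environment that follows it in the source is actually the median-aggregation step toward Theorem \ref{theo:rol_sr} (it invokes Lemma \ref{lemma:RoTR} as an ingredient rather than establishing it). So your reconstruction cannot be compared to a paper-internal argument --- there is none --- and should be judged against what the cited result requires. On that score your skeleton is the right one, and it is essentially the argument underlying \cite{chen2013robust}: the coordinatewise decomposition of $\widehat{\theta}(j)-\theta^\star(j)$ into the untrimmed inlier average plus a surviving-outlier sum and a discarded-inlier sum is exact; the counting $|\Omega_j\cap\mathcal{O}|=|\mathcal{I}\setminus\Omega_j|\leq n_1$ is correct; and the combinatorial threshold bound $\tau_j\leq M_j=\max_{i\in\mathcal{I}}|z_{i,j}|$ (the $n$-th smallest magnitude among $N$ values cannot exceed the largest inlier magnitude) is precisely the device that stops the adversary from inflating the trimming level. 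Your identification of the common sub-exponential scale $\sqrt{\|\theta^\star\|_2^2+\sigma_e^2}$ (with $L$ absorbed into $c$) correctly accounts for the prefactor, and you correctly locate the two sources of $\sqrt{p}$: ambient dimension in the vector Bernstein term, and $\ell_2$-aggregation of coordinatewise corrections in the corruption term.

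Two steps are looser than you acknowledge, and both concern log factors. First, $M_j$ is a maximum over $n$ sub-exponential variables, so the maximal inequality yields $M_j\lesssim \sqrt{\|\theta^\star\|_2^2+\sigma_e^2}\,\log(n/\delta)$, not $\log(1/\delta)$; the $\log n$ does not vanish. Second, the $\ell_2$ aggregation $\bigl(\sum_j M_j^2\bigr)^{1/2}\leq\sqrt{p}\,\max_j M_j$ requires controlling all $p$ maxima simultaneously, which costs a union bound and produces $\log(np/\delta)$; ``absorbing $\log p$ into the constant'' contradicts the lemma's own claim that $c$ is independent of $p$, and the vector-Bernstein step does not help here, since it handles only the inlier average, not the data-dependent trimming correction. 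What your route honestly proves is the stated bound with $\log(1/\delta)$ replaced by $\log(np/\delta)$ in the corruption term. In fairness, this looseness mirrors the statement itself, which is a simplified transcription of the result in \cite{chen2013robust} (the companion bound in Theorem \ref{theo:rpca} is likewise stated without $\log n$ factors); as a reconstruction of the cited argument your proposal is structurally sound, but the exact log factors you claim do not follow from the steps you give.
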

%The above performance guarantee on RoTR is directly adopted from~\cite{chen2013robust}. To get the breakdown point of the RoTR method, w.l.o.g., we assume that $\|\theta^*\|_2=1$. Then, to ensure the above bound is non-trivial, \emph{i.e.}, $\left\|\widehat{\theta}-\theta^*\right\|_2 \leq 1$, the outlier fraction $\lambda = \frac{n_1}{n+n_1}$ needs to satisfy
%\begin{equation*}
%  \lambda \leq \frac{1}{1+c\sqrt{1+{\sigma_e^2}}\sqrt{p}\log p}.
%\end{equation*}

%According to the proof of Theorem~\ref{theo:rpca}, on the $(1-\gamma)k$ machines, the maximal outlier fraction is $\lambda' = \frac{1-\rho\gamma}{1-\gamma}\lambda + d_\mu$.
%Then, 

\begin{proof}
		Based on the results in the Lemma \ref{lemma:RoTR} and Lemma \ref{lemma:median_aggregation}, it is straightforward to get:
		%\begin{equation*}
		%  \|\widetilde{\theta} -\theta^*\|_2 \leq C_\alpha c\|\theta^*\|_2\sqrt{1+\frac{\sigma_e^2}{\|\theta^*\|_2^2}}\left(\sqrt{\frac{p\log p}{n}}+\frac{(1-\gamma)\lambda}{1-\rho\gamma-(1-\gamma)\lambda}\sqrt{p}\log p\right).
		%\end{equation*}.
		\begin{equation*}
		\|\widetilde{\theta} -\theta^\star\|_2 \leq C'_\gamma \|\theta^\star\|_2\sqrt{1+\frac{\sigma_e^2}{\|\theta^\star\|_2^2}} 
		 \left(\sqrt{\frac{p\log (1/\delta)}{N}}  + \lambda(\gamma)\sqrt{p}\log (1/\delta)\right)
		\end{equation*}
		where $C'_\gamma = C_\gamma c$ with $c$ being the constant in Lemma~\ref{lemma:RoTR},  $ C_\gamma = (1-\gamma)\sqrt{\frac{1}{1-2\gamma}} $, and $\lambda(\gamma)=\lambda_{(1-\gamma)}/(\lambda_{(1-\gamma)}$ with $ \lambda_{(1-\gamma)}  $ being the $ \lfloor k (1-\gamma) \rfloor $ smallest outlier fraction in $ \{\lambda_1,\ldots,\lambda_k\} $.
	\end{proof}

	As proving Theorem \ref{theo:rol_pca}, Theorem \ref{theo:rol_sr} can be derived based on the results in Theorem \ref{theo:converge_rate}. For simplicity, we omit the details here.

\section{Conclusions}

We developed a generic  Online Robust Learning (ORL) approach with provable robustness guarantee  and we also demonstrate its application for  Distributed Robust Learning (DRL). The proposed approaches not only significantly enhance the time and memory efficiency of robust learning  but also preserve the robustness of the centralized learning procedures.  Moreover, when the outliers are not uniformly distributed, the proposed approaches are still robust to adversarial outliers distributions. We  provided two concrete examples, online and distributed robust principal component analysis and linear regression.% demonstrate how DRL and ORL work.

\end{document}